\documentclass[review,hidelinks,onefignum,onetabnum]{siamart220329}
\usepackage{caption}
\usepackage{subcaption}
\usepackage{multirow}
\usepackage[ruled]{algorithm2e}
\DeclareMathOperator*{\argmin}{arg\,min}



\usepackage{lipsum}
\usepackage{amsfonts}
\usepackage{graphicx}
\usepackage{epstopdf}
\ifpdf
  \DeclareGraphicsExtensions{.eps,.pdf,.png,.jpg}
\else
  \DeclareGraphicsExtensions{.eps}
\fi


\newsiamremark{remark}{Remark}
\newsiamremark{hypothesis}{Hypothesis}
\crefname{hypothesis}{Hypothesis}{Hypotheses}
\newsiamthm{claim}{Claim}

\headers{The SCS ALGORITHM FOR
KERNEL SVM}{DI ZHANG, AND SUVRAJEET SEN}

\title{The Stochastic Conjugate Subgradient Algorithm for Kernel Support Vector Machines\thanks{Submitted to the editors DATE.
\funding{This paper is supported, in part, by a grant AFOSR FA9550-20-1-0006.}}}

\author{Di Zhang\thanks{Department of Industrial and System Engineering, University of Southern California, Los Angeles, CA 
(\email{dzhang22@usc.edu}).}
\and Suvrajeet Sen\thanks{Department of Industrial and System Engineering, University of Southern California, Los Angeles, CA (\email{suvrajes@usc.edu}).}}

\usepackage{amsopn}


\ifpdf
\hypersetup{
  pdftitle={SCS Algorithm for Kernel SVM},
  pdfauthor={Suvrajeet Sen, Di Zhang}
}
\fi



\usepackage{array}
\newcolumntype{L}[1]{>{\raggedright\let\newline\\\arraybackslash\hspace{0pt}}m{#1}}
\newcolumntype{C}[1]{>{\centering\let\newline\\\arraybackslash\hspace{0pt}}m{#1}}
\newcolumntype{R}[1]{>{\raggedleft\let\newline\\\arraybackslash\hspace{0pt}}m{#1}}
\overfullrule=0pt

\begin{document}

\maketitle

\begin{abstract}
Stochastic First-Order (SFO) methods have been a cornerstone in addressing a broad spectrum of modern machine learning (ML) challenges. However, their efficacy is increasingly questioned, especially in large-scale applications where empirical evidence indicates potential performance limitations. In response, this paper proposes an innovative method specifically designed for kernel support vector machines (SVMs). This method not only achieves faster convergence per iteration but also exhibits enhanced scalability when compared to conventional SFO techniques. Diverging from traditional sample average approximation strategies that typically frame kernel SVM as an “all-in-one” Quadratic Program (QP), our approach adopts adaptive sampling. This strategy incrementally refines approximation accuracy on an ``as-needed'' basis. Crucially, this approach also inspires a decomposition-based algorithm, effectively decomposing parameter selection from error estimation, with the latter being independently determined for each data point. To exploit the quadratic nature of the kernel matrix, we introduce a stochastic conjugate subgradient method. This method preserves many benefits of first-order approaches while adeptly handling both nonlinearity and non-smooth aspects of the SVM problem. Thus, it extends beyond the capabilities of standard SFO algorithms for non-smooth convex optimization. The convergence rate of this novel method is thoroughly analyzed within this paper. Our experimental results demonstrate that the proposed algorithm not only maintains but potentially exceeds the scalability of SFO methods. Moreover, it significantly enhances both speed and accuracy of the optimization process.
\end{abstract}

\begin{keywords}
Kernel SVM, Stochastic Programming, Non-Smooth Convex Optimization
\end{keywords}

\begin{MSCcodes}
65K05, 90C15, 90C20, 90C25
\end{MSCcodes}

\section{Introduction}

Kernel support vector machines (SVM), a subset of supervised learning models, are instrumental in data-driven classification tasks~\cite{scholkopf2002}. Consider a dataset of covariates represented as $S=\left\{ (z_i,w_i) \right\}_{i=1}^{m}$, where the pairs correspond to features and their associated labels respectively. The primary objective is to predict the label $W$ given a random feature $Z$ utilizing the classifier $\beta$. In scenarios where the feature vector $Z$ is exceedingly large or the data deviates from linear separability, it becomes necessary to apply a non-linear transformation to the feature vector $Z$, commonly referred to as kernel mapping $\phi$. Consequently, the kernel SVM problem is formulated as:

\begin{equation}
\label{Loss-function}
\min_{\beta} g(\beta) := \frac{1}{2}||\beta||^2+E [\max\{0,1-W \langle \beta, \phi(Z) \rangle\}],
\end{equation}

\noindent where the expectation $E$ is taken with respect to the joint distribution of $(Z,W)$. Directly addressing the problem as defined in Equation \eqref{Loss-function} is infeasible due to two significant challenges: (a) the joint distribution of the pair $(Z,W)$ is typically unknown, and (b) the kernel mapping $\phi$ is not directly accessible. To overcome the first challenge (a), the focus generally shifts towards resolving an empirical risk minimization (ERM) problem, specifically through a sample average approximation (SAA) approach. With regard to the second challenge (b), while direct access to $\phi$ may not be available, the inner product within the kernel mapping, $\langle \phi(z_i), \phi(z_j) \rangle$, may be available.  This provides a valuable workaround~\cite{scholkopf2002}. Essentially, one uses the Representer Theorem\cite{KG1971} to express $\beta$ in the form of $\beta=\sum_{i=1}^{m} \alpha_i \phi(z_i)$. This transformation enables a finite sample kernel SVM approximation as follows:

\begin{equation} 
\label{Kernel-SVM}
\min_{\alpha}f_S(\alpha): = \frac{1}{2}\langle \alpha, Q \alpha \rangle + \frac{1}{m} \sum_{i=1}^{m} \max\{0,1-w_i \langle \alpha, Q_i \rangle\},
\end{equation}

\noindent where $\alpha = [\alpha_1, \alpha_2, ..., \alpha_m]$ is the decision varaible, $Q \in \mathbb{R}^{m \times m}$ is the kernel matrix with $ Q_{ij} = \langle \phi(z_i), \phi(z_j) \rangle$ and $Q_i$ refers to the $i$-th row of $Q$. It is significant to observe that so long as we have access to $Q_{ij}=\langle \phi(z_i), \phi(z_j) \rangle$, we can solve \eqref{Kernel-SVM} without having the direct knowledge of the kenel mapping $\phi$.  Notably, a variety of kernel mappings, including but not limited to polynomial, Gaussian, and Laplacian kernels, often result in a positive definite matrix $Q$~\cite{scholkopf2002}. This characteristic implies that equation \eqref{Kernel-SVM} constitutes a convex optimization problem.

Equation \eqref{Kernel-SVM} can be interpreted as an “all-in-one” Quadratic Program (aioQP). However, the computational feasibility of solving the aioQP becomes challenging with extremely large datasets, such as those containing millions of data points. To address this problem, our paper introduces an adaptive sampling method aimed at improving the quality of the approximation locally, while enhancing the accuracy of the global approximation on an “as-needed” basis. The approximation at iteration $k$, utilizing a significantly smaller sample size $|S_k| << m$, is expressed as:

\begin{equation}
\label{kenel-SVM-SAA}
\min_{\alpha}f_k(\alpha) := \frac{1}{2}\langle \alpha, Q^k \alpha \rangle + \frac{1}{|S_k|} \sum_{i=1}^{|S_k|} \max\{0,1-w_i \langle \alpha, Q_i^k \rangle\},
\end{equation}

\noindent where $\alpha = [\alpha_1, \alpha_2, …, \alpha_{|S_k|}]$ is the decision variable, $Q^k \in \mathbb{R}^{m \times m}$ is the kernel matrix with $ Q^k_{ij} = \langle \phi(z_i), \phi(z_j) \rangle$ and $Q^k_i$ refers to the $i$-th row of $Q^k$. We will demonstrate that our proposed approach not only offers enhanced accuracy but also has greater scalability compared to Stochastic First Order (SFO) methods in tackling large-scale kernel SVM problems. To lay the groundwork, we will commence by reviewing a range of classic stochastic methods, many of which are applicable to kernel SVM.




\subsection{Stochastic first-order methods}
SFO methods such as ``stochastic gradient descent" (SGD) \cite{RM1951,BB2011, NB2001} or dual coordinate ascent \cite{P1998,J1999} are very popular for solving machine learning problems. For black-box non-smooth optimization, SGD-type algorithms using secant approximations have been proposed by~\cite{YNShanbhag2020}. It is also worth noting that a recent survey~\cite{L2023} summarizes this type of optimization method comprehensively. Simple first-order gradient-based methods dominate the field for several convincing reasons: low computational cost, simplicity of implementation, and strong empirical results.   
For example, in order to solve \eqref{Kernel-SVM}, it is common to adopt a specialized version of SGD such as Pegasos~\cite{SS2011} where the subgradient calculations can be carried out very rapidly and in parallel (for a given $\alpha \in \mathbb{R}^n$). \cite{SZ2013} shows that the Pegasos algorithm obtains an $\varepsilon_p$-sub-optimal solution in time $O(1/(\lambda \varepsilon_p))$, where $\lambda$ is the condition number of $Q$. This run-time does not depend on the number of variables $n$, and therefore is favorable when $n$ is very large.  Finally, asymptotitic analysis via the work of Polyak and Juditsky~\cite{polyak1992} has shown that SFO exhibits asymptotic normality for smooth stochastic optimization and a similar result has been recently obtained in~\cite{davis2023} for the case of non-smooth stochastic convex optimization problems. 

While the aforementioned methods have low computational requirements in any one iteration, there are some disadvantages: a) the convergence rate of such algorithms depend heavily on the condition number of the Hessian of the quadratic function, b) SFO methods are not known for their numerical accuracy especially when the functions are non-smooth and, c) the convergence rate mentioned in the previous paragraph only applies under the assumption that the component functions are
smooth. Since \eqref{Kernel-SVM} has non-smooth component functions, such a convergence rate is not guaranteed.   


\subsection{Stochastic methods for general non-linear  optimization}
For general non-linear stochastic optimization, there are two broad classes of algorithms, most of which are stochastic  generalizations of smooth deterministic optimization algorithms. For gradient based direction finding,  stochastic analogs were studied by \cite{PS2020}, whereas stochastic analogs of trust region methods (STORM) appear in a series of papers \cite{B2014,CS2018,C2018}. From a computational standpoint, direction-finding algorithms may have faster iterations, whereas trust region methods may provide stronger convergence properties \cite{BS2019}. However, the aforementioned stochastic methods rely on the existence of gradients which may not be appropriate for non-smooth stochastic problems such as kernel SVM in \eqref{Kernel-SVM}. Readers interested in the non-smooth, non-convex case may refer to \cite{LS2022}.

\subsection{Stochastic conjugate gradient method}
In contrast to first-order methods, algorithms based on Conjugate Gradients (CG) provide finite termination when the objective function is a linear-quadratic deterministic function \cite{N2006}. Furthermore, CG algorithms are also used for large-scale nonsmooth convex optimization problems \cite{W1975,Y2016}. However, despite its many strengths, (e.g., fast per-iteration
convergence, frequent explicit regularization on step-size, and better parallelization than first-order
methods), the use of conjugate gradient methods for Stochastic Optimization is uncommon. Although \cite{J2018,Y2022} proposed some new stochastic conjugate gradient (SCG) algorithms, their assumptions do not support non-smooth objective functions, and hence not directly applicable to kernel SVM problems.
The following assumptions, which are common for kernel SVM models, will be imposed throughout this paper.

\begin{itemize} \label{conditions}
    \item The optimization problems \eqref{Loss-function}-\eqref{kenel-SVM-SAA} are convex and finite valued. Thus, extended real-valued functions are not permissible in our framework. As a consequence, we assume the function $f$ and $f_k$ possess the Lipschitz constant $0 < L_f < \infty$ and their sub-differentials are non-empty for every $\alpha$. Note that when the sample size is large enough, $L_f$ will represent an upper bound of the Lipschitz constant on both $f$ and $f_k$. Efficient and accurate estimation of the Lipschitz constant holds significant practical importance. Readers interested in this topic may find valuable insights in the work of~\cite{F2019,W1996}.
    
    \item The inner product of the kernel function and the classifier is bounded, i.e., for any kernel mapping $\phi(z)$ and classifier $\beta$,  $|\langle \phi(z), \beta \rangle| < M$. Intuitively, this assumption indicates that the classification error can not be unbounded, which is reasonable for kernel SVM. 
\end{itemize}



In this paper, we adopt a direction-finding approach in which the direction is obtained via a specialized problem of finding a vector with the smallest norm over a line segment. This approach was originally designed for deterministic non-smooth convex optimization problems \cite{W1974,W1975}. 

In order to solve stochastic optimization problems when functions are non-smooth (as in  \eqref{Kernel-SVM}), it is straightforward to imagine that a stochastic subgradient method could be used. In this paper, we treat \eqref{Kernel-SVM} as a stochastic piecewise linear quadratic optimization problem and propose a new stochastic conjugate subgradient (SCS) algorithm which accommodates both the curvature of a quadratic function, as well as the decomposition of subgradients by data points, as is common in stochastic programming \cite{HS1991,HS1994}. This combination enhances the power of stochastic first-order approaches without the additional burden of second-order approximations. In other words, our method shares the spirit of Hessian Free (HF) optimization, which has attracted some attention in the machine learning community \cite{MJ2010,CScheinberg2017}. As our theoretical analysis and the computational results reveal, the new method provides a ``sweet-spot'' at the intersection of speed, accuracy, and scalability of algorithms (see Theorems \ref{L^0} and \ref{Convergence rate}).



\subsection{Contributions}

(1) Our methodology extends Wolfe's deterministic conjugate subgradient method \cite{W1975} into the stochastic domain. While a straightforward application of \cite{W1975} to stochastic programming (SP) problems using Sample Average Approximation (SAA) \cite{Shapiro2003} might be considered, our focus is on solving Kernel SVM problems with extremely large datasets such as Hepmass with 3.5 million samples. Consequently, we employ an adaptive sampling strategy over a deterministic finite sum approximation. This approach bears some resemblance to Stochastic Decomposition (SD) \cite{HS1991, HS1994}, although SD employs a "trust-region" strategy for iterative progress. More notably though, deterministic algorithms for finite-sum SAA can impose significant computational and memory demands, especially with large datasets.  This observation is corroborated by our experimental results. To address the challenges arising from non-smoothness as well as extremely large datasets, our novel algorithm integrates several disparate notions such as adaptive sampling, decomposition, conjugate subgradients, and line search techniques. We demonstrate that this amalgamation effectively approximates solutions for kernel SVM problems with extremely large datasets.

(2) The ensuing analysis of our algorithm establishes a convergence rate of $O(1/\varepsilon^2)$ for kernel SVM, where $\varepsilon$ denotes the desired accuracy. This contrasts with the convergence rate of $O(1/\sqrt[5]{k})$ reported in \cite{YNShanbhag2020}, applicable to smooth, albeit far more demanding black-box and non-strongly convex problems. 

(3) In contrast to trust-region methods, which necessitate solving a constrained convex optimization problem in each iteration, our algorithm derives a descent direction by solving a one-dimensional convex optimization problem.

(4) Our computational experiments demonstrate that, from an optimization standpoint, the solutions produced by SCS yield lower objective function values compared with SFO methods, maintaining competitive computational times for small to medium-sized datasets. More crucially, as the dataset size increases, the efficiency and accuracy of our Stochastic Conjugate Subgradient (SCS) algorithm surpasses those of algorithms such as Pegasos, a specialized first-order methods for kernel SVM. 

The structure of this paper is as follows.  In \S\ref{SCSA}, the discussion evolves from mathematical concepts to computational algorithms. We will discuss the SCS algorithm for solving large scale kernel SVM via a combination of Wolfe's non-smooth conjugate subgradient method \cite{W1975} and adaptive sampling which helps leverage a successive sampling methodology as in Stochastic Decomposition \cite{HS1991, HS1994}. This combination not only introduces some elements of recursive estimation into kernel SVM, but also bypasses the need for constrained QP algorithms, or even the need for solving a large linear system (of equations) rapidly. In \S\ref{CA}, we will present the convergence analysis of the SCS algorithm by showing that the algorithm generates a super-martingale sequence which converges in expectation. In \S\ref{IT&ER}, we will provide some technical details about its implementation and compare the computational results with its two progenitors: the Pegasos algorithm (a SFO method) and Wolfe's algorithm~\cite{W1975}, which is a deterministic non-smooth optimization method. In the context of this comparison, we will discuss some advantages and disadvantages of adopting the SCS algorithm for solving the kernel SVM problem. Finally, our conclusions are provided in \S\ref{Con}. The notations used in this paper are summarized in Appendix \ref{notations}.

\section{Stochastic Conjugate Subgradient (SCS) Algorithm}\label{SCSA}

In this section, we focus on solving (\ref{kenel-SVM-SAA}) using the SCS  algorithm in which we combine the non-smooth CG method~\cite{W1975} with adaptive sampling and decomposition as in stochastic programming \cite{BL2011}. In this setting, decomposition refers to a treatment of the piecewise linear (max) function and the quadratic part separately in \eqref{kenel-SVM-SAA} to find the direction of change for the decision variables. We will first state the SCS algorithm (Algorithm \ref{SCSQL-KSVM}) and then explain the key concepts underlying this approach.  In the following, we let $Nr(G)$ denote the projection of $``0"$ onto any finite collection of vectors $G$.  During the course of the algorithm, we will adaptively enlarge the data set which is represented within the SVM problem.  Before the start of the $k^{th}$ iteration, the set of data available to the algorithm will be denoted $S_{k-1}$.  In iteration $k$ we will include additional data points denoted by the set $s_k$, so that $|S_k| = |S_{k-1}| + |s_k|$. The SCS algorithm consists of four important components:
\begin{itemize}
\item Sequential function approximation. In many cases we have a fixed number of data points. However, our focus is on situations where the data can be queried from an oracle sequentially. Thus, we use adaptive sampling~\cite{HS1991,HS1994,P2010}, 

\begin{equation} \label{kenel-SVM-SAA-2}
    \min_{\alpha}f_k(\alpha) = \frac{1}{2}\langle \alpha, Q^k \alpha \rangle + \frac{1}{|S_k|} \sum_{i=1}^{|S_k|} \max\{0,1-w_i \langle \alpha, Q_i^k \rangle\}.
\end{equation}

It is important to note that $w_i$ and $Q_i^k$ are based on realization of random variables $(Z,W)$. Consequently, $f_k$ is a realization of the expectation functional in  \eqref{Loss-function}.
In the remainder of this paper, we will let the kernel matrix $Q^k$ agree with the size of data set ($|S_k|$) in any iteration.  

\item Direction finding. The idea here is inspired by Wolfe's non-smooth conjugate subgradient method which uses the smallest norm of the convex combination of the previous search direction ($-d_{k-1}$) and the current subgradient ($g_k$). More specifically, we first solve the following one-dimensional QP

\begin{equation} \label{p^k}
\lambda_k^* = \argmin_{\lambda \in [0,1]}\frac{1}{2}||\lambda \cdot (-d_{k-1}) + (1-\lambda) \cdot g_k||^2.
\end{equation}

\noindent Then we can set the new search direction as
\begin{equation*}
    d_k=-\big[\lambda_k^* \cdot (-d_{k-1})+(1-\lambda_k^*) \cdot g_k\big] := -Nr(G_k),
\end{equation*}

\noindent where $G_k=\{-d_{k-1},g_k\}$, and $\lambda_k^*$ denotes the optimal weight for the convex combination.  Clearly if one fixes $\lambda = 0$, then the search direction reduces to that of the subgradient method.

\item Choice of step size. This is achieved by combining concepts of deterministic Wolfe's algorithm, stochastic trust region and line search methods~\cite{N2006, W1975}. The step actually applies Wolfe's line search criteria to the function which is a sampled realization of the expectation functional. Let $g(t) \in \partial f_{k-1}(\hat{\alpha}_k + t \cdot d_k)$ and define the intervals $L$ and $R$.
\begin{equation} \label{L&R}
    \begin{aligned}
        & L=\{t>0 \ | \ f_{k-1}(\hat{\alpha}_k + t \cdot d_k)-f_{k-1}(\hat{\alpha}_k) \leq -m_1||d_k||^2 t \},\\
        & R =\{t>0 \ | \ 0 > \langle g(t),d_k \rangle \geq -m_2||d_k||^2 \}.
    \end{aligned}
\end{equation}
\noindent The above conditions are consistent with the line-search rules in Wolfe's deterministic algorithm~\cite{W1974}. The output of the step size will satisfy two metrics: (i) Identify a set $L$ which includes points which reduce the objective function approximation $f_{k-1}$, and (ii) Identify a set $R$ for which the directional derivative estimate for $f_{k-1}$ is improved. The algorithm seeks points which belong to $L \cap R$. The combination of $L$ and $R$ is called strong Wolfe condition and it has been shown in \cite{W1975} Lemma 1 that $L \cap R$ is not empty.

\item Termination criteria. The algorithm concludes its process when $||d_k||<\varepsilon$ and $\delta_k < \delta_{min}$. As we will show in Theorem \ref{optimality}, a dimunitive value of $||d_k||$ indicates a small norm of the subgradient $g \in f(\hat{\alpha}_k)$, fulfilling the optimality condition for an unconstrained convex optimization problem. Additionally, a threshold $\delta_{min}$ is established to prevent premature termination in the initial iterations. Without this threshold, the algorithm may halt prematurely if the initial samples are correctly classified by the initial classifier $\alpha_0$, resulting in $||d_k|| < \varepsilon$. However, if $\delta_{min}$ is introduced, based on Theorem \ref{L^0}, $|S_k|$ should be chosen such that 
\begin{equation*}
    |S_k| \geq -8\log(\varepsilon/2)\cdot \frac{(M+1)^2}{\kappa^2\delta_{min}^4},
\end{equation*}   

\noindent where $\kappa$ is the $\kappa$-approximation defined in Definition \ref{kappa approximation}. This effectively mitigates any early stopping concern with high probability.
\end{itemize}

\begin{algorithm}[]
\caption{Stochastic Conjugate Subgradient (SCS) Algorithm}
\scriptsize
\label{SCSQL-KSVM}
\SetAlgoLined
\nl Set $\varepsilon > 0$, $\delta_{min} < \delta_0 < \delta_{max}$, $\eta_1 \in (0,1)$, $\eta_2 > 0$, $\gamma>1$, and set $k \leftarrow 0$. 
\label{setup}

\nl Randomly generate $S_0$ from the data set $S$ and build an initial Radial Basis Function (RBF) kernel $Q^0$ based on $S_0$.

\nl Define $f_0(\alpha)= \frac{1}{2} \langle \alpha, Q^0 \alpha \rangle +\frac{1}{|S_0|}\sum_{i=1}^{|S_0|} \max\{0,1-w_i\langle \alpha, Q^0_i \rangle \}.$ \label{SAA}

\nl Set a feasible solution $\hat{\alpha}_0 = \overrightarrow{0} \in \mathbb{R}^{s_0}$ and an initial direction $d_0 \in \partial f_0(\hat{\alpha}_0)$.

\While{$||d_k||>\varepsilon \ \textit{or} \ \delta_k>\delta_{min} $}
{
    \nl $k \leftarrow k+1$\;
   
    \nl Obtain $g_{k} \in \partial {f}_{k-1}(\hat{\alpha}_{k-1})$, $G_k= \{ -d_{k-1},g_k\}$ and $d_k =-Nr(G_k)$\; \label{conjugate}
   
    \nl Apply Algorithm \ref{LSA} to find step size $t_k$ \;
    
    \nl Set $\alpha_k = \hat{\alpha}_{k-1} + t_k d_k$\;
   
    \nl Next randomly generate a set of new samples $s_k$ of cardinality $|s_k|$\;
    
    \nl Define $S_k \overset{\Delta}{=}  S_{k-1} \cup s_k$\;
    
    \nl Construct $f_k(\alpha)=\frac{1}{2}\alpha^T Q^k \alpha + \frac{1}{|S_k|} \sum_{i=1}^{|S_k|} \max\{0,1-w_i\langle \alpha, Q^k_i \rangle \}$\;
    
    \nl Randomly generate a set of new samples $T_k$ of cardinality $|T_k|=|S_k|$ independent of $S_{k}$\;
    
    \nl Construct $\hat{f}_k(\alpha)=\frac{1}{2}\alpha^{\top} \hat{Q}^k \alpha + \frac{1}{|T_k|} \sum_{i=1}^{|T_k|} \max\{0,1-\hat{w}_i\langle \alpha, \hat{Q}^k_i \rangle \}$ based on $T_k$\;\label{SAA2}

    \nl Let $\beta_k=(\alpha_k,\overrightarrow{0}) \in \mathbb{R}^{|S_k|}$, $\hat{\beta}_{k-1}=(\hat{\alpha}_{k-1},\overrightarrow{0}) \in \mathbb{R}^{|S_k|}$ and $d_k=(d_k,\overrightarrow{0}) \in \mathbb{R}^{|S_k|}$\;
    
    \eIf{$f_k(\beta_k)-f_k(\hat{\beta}_{k-1}) \leq \eta_1(\hat{f}_{k}(\beta_k)-\hat{f}_{k}(\hat{\beta}_{k-1}))$ and $||d_k||>\eta_2\delta_{k-1}$}{\nl
        $\hat{\alpha}_k \leftarrow \beta_k, \quad \delta_k \leftarrow \min \{\gamma\delta_{k-1},\delta_{max} \}$\;
    }{\nl
            $\hat{\alpha}_k \leftarrow \hat{\beta}_{k-1}, \quad \delta_k \leftarrow \max\{\frac{\delta_{k-1}}{\gamma}, \delta_{min}\}$\;
        } 

}
\end{algorithm}
\textbf{Remark 1}: (i) The initial choice of step size $\delta_0$ may require some tuning depending on the data set but it does not affect the convergence property of this algorithm.\\ 
(ii) According to Theorem \ref{delta 1}, $\delta_{max} := \frac{\varepsilon}{\zeta}$ and 
\begin{equation*}
    \zeta = \max \Bigl\{4n\kappa,\eta_2,\frac{4\kappa}{C_1(1-\eta_1)}\Bigr\},
\end{equation*}
 where $C_1=\frac{m_1}{2n}$, $1< n \in \mathbb{Z}^+$.\\
 (iii) Note that replacing $d_k$ with an arbitrary subgradient may not ensure descent unless the function $f_{k-1}$ is smooth. Thus, simply substituting $d_k$ with $g_k$ and using SGD-based method may not guarantee the convergence rate shown in Theorem 
 \ref{Convergence rate}.\\
 (iv) In general, optimization problems are stated with the decision variables in a fixed dimension. However, since we have designed an adaptive sampling approach, the dimension of our decision variables grows iteratively according to the increase in sample size.  
 \\
 
\setcounter{AlgoLine}{0}
\begin{algorithm}[] 
\caption{Line Search Algorithm}
\scriptsize
\label{LSA}

\nl Set $\frac{1}{4}\leq m_1<0.5$, $\frac{1}{4} \leq m_2<m_1$, $1< n \in \mathbb{Z}^+$ and $b=\frac{\delta_k}{n}$.\;

\nl Choose $t||d_k|| \in [b,\delta_k]$.\; 
    
    \If{$t \in L \backslash R$}{\nl $t=2t$ until $t \in R$ or $t||d_k|| > \delta_k$. Set $I=[t/2,t]$.\;
    
    \eIf{$t ||d_k|| > \delta_k$}{\nl Return $t/2$.}{
    \eIf{$t \in L \cap R$}{ \nl Return $t$}{
        \While{$t \notin L \cap R$}{ 
        \nl Set $t$ be the middle point of $I$\;\label{sw}
    
        \eIf{$ t \in R \backslash L $}
        {\nl Replace I by its left half}
        {\nl Replace I by its right half} \label{ew}
        }
    }
    }
    }
    
    \If{$t \in R\backslash L$}{\nl $t=t/2$ until $t \in L$ or $t||d_k||<b$. Set I be the interval of $[t,2t]$.\;
    
    \eIf{$t \in L \cap R$}{\nl Return $t$}
    {\eIf{$t||d_k||<b$}
    {\nl Return $t=0$}{\nl Repeat line (\ref{sw})-(\ref{ew}) while $t \notin L \cap R$. 
    }}
    }
\nl Return $t$
\end{algorithm}

\textbf{Remark 2}: Algorithm \ref{LSA} will have two possible outcomes. One is it returns a suitable step size $t$ such that $t \in L \cap R$ and $t ||d_k|| \geq \frac{\delta_k}{n}$. The other is it keeps shrinking the step size until $t ||d_k|| < \frac{\delta_k}{n}$. This case is captured in step 11 of Algorithm 2 and the algorithm will return $t = 0$. Since $\alpha_k = \hat{\alpha}_{k-1} + t_k d_k$, $t=0$ means the decision variable will not be updated in this iteration. In the next iteration, Algorithm \ref{SCSQL-KSVM} will generate more samples and these new samples may improve the search direction. This whole procedure creates a subroutine to first terminate the line-search algorithm and then improve the search direction by further sampling. This phenomenon is similar to retaining the previous incumbent solution in the stochastic decomposition algorithm~\cite{HS1994}.

\section{Convergence Properties}
\label{CA}
While the SCS algorithm and the stochastic trust region method share some similarities in their proofs of convergence, the specifics of these arguments are different because the former is based on direction-finding and the latter is based on trust regions. For example, the fully-linear property in stochastic trust region method plays no role in the convergence proof of this paper. Similarly, the sufficient decrease condition (Lemma \ref{d2}) in our algorithm relies on Wolfe's line search algorithm while the condition used for a stochastic trust region method requires the Cauchy decrease property.   
In the following subsections, Theorem \ref{L^0} shows the sample complexity of the algorithm, Theorem \ref{delta 1} and the subsequent corollary show that sufficient descent can be obtained and as a result, the stochastic sequence is a supermartingale. 
In \S \ref{4.3}, we apply the renewal reward theorem and optional stopping time theorem \cite{GGD2020} to establish the  convergence rate of the SCS algorithm. The specific results are provided in Theorem \ref{Convergence rate}. 

\underline{Notational Guide.} We will adopt the convention that random variables will be denoted by capital letter (e.g., $B$) while a realization will be denoted by the corresponding lower case letter (e.g., $b$). Also, to avoid any notational confusion, please note that $\mathcal{B}$ will denote a ball and $C_i$'s are ordinary constants.

\subsection{Stochastic Local Approximations} \label{4.1}
In this subsection, we aim to demonstrate that with an adequate number of samples, the function $f_k$ will serve as an ``effective'' stochastic local approximation of $f$. The criteria for what constitute ``effective" in this context is provided in Definition \ref{kappa approximation}, which provides a precise metric. Additionally, Definition \ref{stocahstic accuracy kappa approximation} extends this concept to a stochastic framework. 

\begin{definition} \label{kappa approximation}  A function $f_k$ is a local $\kappa$-approximation of $f$ on $\mathcal{B}(\hat{\alpha}_k,\delta_k)$, if for all $\alpha \in \mathcal{B}(\hat{\alpha}_k,\delta_k)$,
\begin{equation*}
|f(\alpha)-f_k(\alpha)| \leq \kappa \delta_k^2.
\end{equation*}

\end{definition}


\begin{definition} \label{stocahstic kappa approximation}Let $\Delta_k$, $\hat{A}_k$ denote the random counterpart of $\delta_k$,  $\hat{\alpha}_k$, respectively, and $\mathcal{F}_{k}$ denote the $\sigma$-algebra generated by $ \{\mathcal{F}_{i},\hat{F}_i,
\hat{F}_{i+1}\}_{i=0}^{k-1}$. Notice that this definition allows us to include the value of the so-called incumbent sequences (of Stochastic Decomposition \cite {HS1994}) as part of the $\sigma$-algebra. A sequence of random models $\{\mathcal{F}_{k}\}$ is said to be $\mu_1$-probabilistically $\kappa$-approximation of $f$ with respect to $\{\mathcal{B}(\hat{A}_k,\Delta_k)\}$ if the event 
\begin{equation*}
    I_k \overset{\Delta}{=}  \{ \mathcal{F}_{k} \ \textit{is} \ \kappa- \textit{approximation of}  \ f \textit{on} \ \mathcal{B}(\hat{A}_k,\Delta_k)\}
\end{equation*}

\noindent satisfies the condition 
\begin{equation*}
    \mathbb{P}(I_k|\mathcal{F}_{k-1}) \geq \mu_1 > 0.
\end{equation*}
\end{definition}


\noindent \textbf{Remark 2}: (i) Unlike \cite{PS2020} which imposes a generic set of guidelines for stochastic line-search methods including requirements on the gradient approximation, our setup accommodates non-smoothness of the kernel SVM problems. For this reason, our requirements simply focus on sufficiently accurate function estimation without invoking gradient approximations.
(ii) In the interest of clarity, we emphasize that the incumbent decision $\hat{\alpha}_k$ has its random counterpart denoted as $\hat{A}_k$.\\

\noindent Given the relation $\beta = \sum_{i=i}^{|S_k|}\alpha_i\phi(z_i)$, we can define the function $f(\alpha)$ as a transformation of $g(\beta)$
\begin{equation}
    g(\beta)=f(\alpha):=\frac{1}{2}\langle \alpha, Q^k \alpha \rangle +  E[\max\{0,1-W \langle \alpha, Q_i^k(Z) \rangle\}],
\end{equation}

\noindent where $Q_i^k(Z) = [\langle \phi(z_1), \phi(Z) \rangle, \langle \phi(z_2), \phi(Z) \rangle,...,\langle \phi(z_{|S_k|}), \phi(Z) \rangle]$. Similarly, we can also express $f_k(\alpha)$ in terms of $g_k(\beta)$ as follows.

\begin{equation}
    f_k(\alpha)=g_k(\beta):=\frac{1}{2}||\beta||^2 + \frac{1}{|S_k|}\sum_{i=1}^{|S_k|} \max\{0,1-w_i \langle \beta, \phi(z_i) \rangle\}.
\end{equation}

\begin{theorem} \label{L^0} 
Assume that $f$ and $f_k$ are Lipschitz continuous with constant $L_f$, for any $0 < \varepsilon <1$, $\kappa > \frac{4L_f}{\delta_{min}}$. If for all $i$, we have $|\langle \beta, \phi(z_i)\rangle| \leq M$ and 
\begin{equation} \label{sample_size}
 |S_k| \geq -8\log(\varepsilon/2)\cdot \frac{(M+1)^2}{\kappa^2\delta_k^4},    
\end{equation}
then
\begin{equation*}
    \mathbb{P}(|f_k(\alpha)-f(\alpha)| \leq \kappa \delta_k^2, \ \forall \alpha \in \mathcal{B}(\hat{\alpha}_k,\delta_k) ) \geq 1-\varepsilon.
\end{equation*}
\end{theorem}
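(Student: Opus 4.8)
The plan is to reduce the uniform deviation bound over the ball $\mathcal{B}(\hat{\alpha}_k,\delta_k)$ to a single pointwise concentration estimate at the center $\hat{\alpha}_k$, which is then dispatched by Hoeffding's inequality. The starting observation is that $f$ and $f_k$ carry the \emph{same} quadratic term $\tfrac12\langle\alpha,Q^k\alpha\rangle$, so upon subtracting, the quadratic part cancels and
\[
f_k(\alpha)-f(\alpha)=\frac{1}{|S_k|}\sum_{i=1}^{|S_k|}h_i(\alpha)-E[h(\alpha)],
\]
where $h_i(\alpha)=\max\{0,1-w_i\langle\alpha,Q_i^k\rangle\}=\max\{0,1-w_i\langle\beta,\phi(z_i)\rangle\}$ via the Representer identity $\beta=\sum_i\alpha_i\phi(z_i)$, and $h(\alpha)$ is the corresponding random hinge loss. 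Since $w_i\in\{-1,+1\}$ and $|\langle\beta,\phi(z_i)\rangle|\le M$, each $h_i(\alpha)$ lies in an interval of length at most $M+1$. This is the only place the boundedness assumption enters, and it is exactly what makes $f_k-f$ amenable to a sub-Gaussian tail bound even though $f$ and $f_k$ themselves are unbounded.

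Next I would pass from an arbitrary $\alpha$ in the ball to the center. By the triangle inequality and the $L_f$-Lipschitz continuity of both $f$ and $f_k$,
\[
|f_k(\alpha)-f(\alpha)|\le |f_k(\hat{\alpha}_k)-f(\hat{\alpha}_k)|+2L_f\|\alpha-\hat{\alpha}_k\|\le |f_k(\hat{\alpha}_k)-f(\hat{\alpha}_k)|+2L_f\delta_k
\]
for every $\alpha\in\mathcal{B}(\hat{\alpha}_k,\delta_k)$. Because $\delta_k\ge\delta_{min}$ and $\kappa>4L_f/\delta_{min}$, one gets $2L_f\delta_k\le\tfrac12\kappa\delta_k^2$, so on the event $\{|f_k(\hat{\alpha}_k)-f(\hat{\alpha}_k)|\le\tfrac14\kappa\delta_k^2\}$ we have $\sup_{\alpha\in\mathcal{B}(\hat{\alpha}_k,\delta_k)}|f_k(\alpha)-f(\alpha)|\le\tfrac34\kappa\delta_k^2\le\kappa\delta_k^2$. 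Hence it suffices to show this single-point event has probability at least $1-\varepsilon$. This step deliberately avoids a covering-number union bound over the ball, which would introduce a spurious $|S_k|$-dependent factor and destroy the clean form of \eqref{sample_size}.

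For the pointwise estimate, conditional on $\hat{\alpha}_k$ the values $h_1(\hat{\alpha}_k),\dots,h_{|S_k|}(\hat{\alpha}_k)$ are i.i.d., lie in an interval of length at most $M+1$, and have common mean $E[h(\hat{\alpha}_k)]=f(\hat{\alpha}_k)-\tfrac12\langle\hat{\alpha}_k,Q^k\hat{\alpha}_k\rangle$, while their empirical average is $f_k(\hat{\alpha}_k)-\tfrac12\langle\hat{\alpha}_k,Q^k\hat{\alpha}_k\rangle$; so their centered average equals $f_k(\hat{\alpha}_k)-f(\hat{\alpha}_k)$ exactly. Hoeffding's inequality then gives
\[
\mathbb{P}\bigl(|f_k(\hat{\alpha}_k)-f(\hat{\alpha}_k)|>\tfrac14\kappa\delta_k^2\bigr)\le 2\exp\!\Bigl(-\tfrac{|S_k|\kappa^2\delta_k^4}{8(M+1)^2}\Bigr),
\]
and requiring the right-hand side to be at most $\varepsilon$ rearranges to precisely $|S_k|\ge -8\log(\varepsilon/2)\,(M+1)^2/(\kappa^2\delta_k^4)$, i.e.\ hypothesis \eqref{sample_size}. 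Combining with the Lipschitz reduction of the previous paragraph yields the theorem.

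I expect the main obstacle to be not the concentration estimate but the measure-theoretic bookkeeping around it: $\hat{\alpha}_k$ is itself built from sampled data, so Hoeffding must be applied after conditioning on a $\sigma$-algebra that freezes the center while keeping the samples defining $f_k$ conditionally i.i.d.\ — which is exactly the role of the filtration in Definition \ref{stocahstic kappa approximation}. Everything else is routine; the two substantive ideas are (i) that the quadratic term cancels, so only the bounded hinge part has to concentrate, and (ii) the Lipschitz passage to the center, which keeps the required $|S_k|$ free of the dimension $|S_k|$ and of any covering number.
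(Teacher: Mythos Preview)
Your proposal is correct and follows essentially the same argument as the paper: cancel the common quadratic term, apply Hoeffding's inequality at the center $\hat{\alpha}_k$ using the boundedness $|\langle\beta,\phi(z_i)\rangle|\le M$, and then extend to the whole ball via the $L_f$-Lipschitz property of $f$ and $f_k$ together with $\kappa>4L_f/\delta_{min}$. The only cosmetic difference is that the paper concentrates to tolerance $\tfrac12\kappa\delta_k^2$ at the center (with a slightly looser Hoeffding constant) rather than your $\tfrac14\kappa\delta_k^2$, but both choices reproduce the same sample-size bound \eqref{sample_size}.
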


\begin{proof}
First, for the point $\hat{\alpha}_k$, given that $\hat{\beta}_k = \sum_{i=i}^{|S_k|}\hat{\alpha}_{k,i}\phi(z_i)$, we have 
\begin{equation*}
    \begin{aligned}
        & f(\hat{\alpha}_k)=g(\hat{\beta}_k)=\frac{1}{2}||\hat{\beta}_k||^2+E [\max\{0,1-W \langle \hat{\beta}_k, \phi(Z) \rangle\}],\\
        & f_k(\hat{\alpha}_k)=g_k(\hat{\beta}_k)=
        \frac{1}{2}||\hat{\beta}_k||^2 + \frac{1}{|S_k|}\sum_{i=1}^{|S_k|} \max\{0,1-w_i \langle \hat{\beta}_k, \phi(z_i) \rangle\}.
    \end{aligned}
\end{equation*}

\noindent Thus, 

\begin{equation*}
    \begin{aligned}
        f_k(\hat{\alpha}_k)-f(\hat{\alpha}_k) & =g_k(\hat{\beta}_k)-g(\hat{\beta}_k)\\
        & =\frac{1}{|S_k|}\sum_{i=1}^{|S_k|} \max\{0,1-w_i \langle \hat{\beta}_k, \phi(z_i) \rangle\}-E [\max\{0,1-W \langle \hat{\beta}_k, \phi(Z) \rangle\}].
    \end{aligned}
\end{equation*}
\noindent Using the assumption that $|\langle \beta, \phi(z_i)\rangle| \leq M$ for all $i$, Hoeffding's inequality implies that
\begin{equation*}
    \mathbb{P}(|f_k(\hat{\alpha}_k)-f(\hat{\alpha}_k)| \leq \frac{1}{2}\kappa \delta_k^2) \geq 1-2\exp \left (-\frac{\kappa^2\delta_k^4 |S_k|^2}{8 \cdot |S_k| \cdot (M+1)^2} \right ) \geq 1-\varepsilon,
\end{equation*}
\noindent which indicates that when 
\begin{displaymath}
   |S_k| \geq -8\log(\varepsilon/2)\cdot \frac{(M+1)^2}{\kappa^2\delta_k^4}, 
\end{displaymath} 
we have 
\begin{equation*}
    \mathbb{P}(|f_k(\hat{\alpha}_k)-f(\hat{\alpha}_k)| \leq \frac{1}{2}\kappa\delta_k^2 ) \geq 1-\varepsilon.
\end{equation*}

\noindent For any other $\alpha \in \mathcal{B}(\hat{\alpha}_k,\delta_k)$, if $|f_k(\hat{\alpha}_k)-f(\hat{\alpha}_k)| \leq \frac{1}{2}\kappa\delta_k^2$, then

\begin{equation*}
    \begin{aligned}
        |f_k(\alpha)-f(\alpha)| & \leq |f_k(\alpha)-f_k(\alpha_k)|+|f_k(\alpha_k)-f(\alpha_k)|+ |f(\alpha_k)-f(\alpha)| \\
        & \leq 2L_f \cdot \delta_k + \frac{1}{2} \kappa \delta_k^2\\
        & \leq \kappa \delta_k^2,
    \end{aligned}
\end{equation*}

\noindent where the second last inequality is due to the assumption that $f_k$ and $f$ are Lipschitz continuous and the last inequality is because $\kappa>\frac{4L_f}{\delta_{min}}>\frac{4L_f}{\delta_k}$. Thus, we conclude that if $|S_k|$ satisfies Equation \eqref{sample_size}, then 

\begin{equation*} \label{fl1}
    \mathbb{P}(|f_k(\alpha)-f(\alpha)| \leq \kappa \delta_k^2, \ \forall \alpha \in \mathcal{B}(\hat{\alpha}_k,\delta_k) ) \geq 1-\varepsilon.
\end{equation*}
\end{proof}

\subsection{Sufficient Decreases and Supermartingale Property} 
This section establishes the stochastic variant of the sufficient decrease condition, known as the supermartingale property, in Theorem \ref{delta 1}. The proof is segmented into four distinct cases, each based on whether the conditions of $\kappa$-approximation (as defined in Definition \ref{stocahstic kappa approximation}) and $\varepsilon_F$-accuracy (as outlined in Definition \ref{stocahstic accuracy}) are met or not. Lemmas \ref{d1} - \ref{d3} detail the outcomes for these various situations. Following these lemmas, the supermartingale property and its comprehensive proof are presented.

\begin{definition} The value estimates $\hat{f}_k \overset{\Delta}{=} \hat{f}_k(\hat{\alpha}_k)$ and $\hat{f}_{k+1} \overset{\Delta}{=}  \hat{f}_k(\hat{\alpha}_k+td_k)$ are $\varepsilon_F$-accurate estimates of $f(\hat{\alpha}_k)$ and $f(\hat{\alpha}_k+td_k)$, respectively, for a given $\delta_k$ if and only if 
\begin{equation*}
    |\hat{f}_k-f(\hat{\alpha}_k)| \leq \varepsilon_F \delta_k^2, \quad |\hat{f}_{k+1}-f(\hat{\alpha}_k+td_k)| \leq \varepsilon_F \delta_k^2.
\end{equation*}
\end{definition}

\begin{definition} \label{stocahstic accuracy} A sequence of model estimates $\{\hat{F}_k,\hat{F}_{k+1}\}$ is said to be $\mu_2$-probabilistically $\varepsilon_F$-accurate with respect to $\{\hat{A}_k,\Delta_k,S_k\}$ if the event 
\begin{equation}
    J_k \overset{\Delta}{=}  \{ \hat{F}_k,\hat{F}_{k+1} \ \textit{are} \ \varepsilon_F\textit{-accurate estimates }  \ \textit{for} \ \Delta_k \}
\end{equation}
\noindent satisfies the condition 
\begin{equation}
    \mathbb{P}(J_k|\mathcal{F}_{k-1}) \geq \mu_2 >0.
\end{equation}
\end{definition}

\begin{lemma} \label{d1}
Suppose that ${f}_k$ is a $\kappa$-approximation of $f$ on $\mathcal{B}(\hat{\alpha}_k ,\delta_k)$ and $C_1 = \frac{1}{8n}$, where $n$ is specified in Algorithm \ref{LSA}. If 
    \begin{equation} \label{d1c}
    \delta_k \leq \frac{1}{16n\kappa} ||d_k||
    \end{equation}

    \noindent and Algorithm \ref{LSA} returns a suitable step size $t \in L$ and $t ||d_k|| > \frac{\delta_k}{n}$, then we have 
    \begin{equation} \label{d1con}
        f(\hat{\alpha}_k+t d_k)-f(\hat{\alpha}_k) \leq -C_1 ||d_k|| \delta_k.
    \end{equation}
\end{lemma}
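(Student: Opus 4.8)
The plan is to relate the decrease in the \emph{true} objective $f$ over the step $\hat{\alpha}_k \mapsto \hat{\alpha}_k + t d_k$ to the decrease in the \emph{sampled model} $f_k$ that the line search actually controls, and to pay for the switch with the $\kappa$-approximation error evaluated at the two endpoints. First I would observe that Algorithm~\ref{LSA} only ever returns a step with $t\,||d_k|| \le \delta_k$, so both $\hat{\alpha}_k$ and $\hat{\alpha}_k + t d_k$ lie in the ball $\mathcal{B}(\hat{\alpha}_k,\delta_k)$; hence the hypothesis that $f_k$ is a $\kappa$-approximation of $f$ on that ball gives $|f(\hat{\alpha}_k)-f_k(\hat{\alpha}_k)| \le \kappa\delta_k^2$ and $|f(\hat{\alpha}_k+t d_k)-f_k(\hat{\alpha}_k+t d_k)| \le \kappa\delta_k^2$.

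Next, since Algorithm~\ref{LSA} returns $t \in L$, the defining inequality of $L$ in~\eqref{L&R} yields the sufficient-decrease bound $f_k(\hat{\alpha}_k + t d_k) - f_k(\hat{\alpha}_k) \le -m_1||d_k||^2 t$. Writing
\begin{equation*}
f(\hat{\alpha}_k+t d_k)-f(\hat{\alpha}_k) = \big[f(\hat{\alpha}_k+t d_k)-f_k(\hat{\alpha}_k+t d_k)\big] + \big[f_k(\hat{\alpha}_k+t d_k)-f_k(\hat{\alpha}_k)\big] + \big[f_k(\hat{\alpha}_k)-f(\hat{\alpha}_k)\big],
\end{equation*}
and bounding the first and third brackets by $\kappa\delta_k^2$ and the middle bracket by $-m_1||d_k||^2 t$ gives $f(\hat{\alpha}_k+t d_k)-f(\hat{\alpha}_k) \le 2\kappa\delta_k^2 - m_1||d_k||^2 t$. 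I would then use the two remaining hypotheses to turn the right-hand side into a multiple of $||d_k||\delta_k$: from $t\,||d_k|| > \delta_k/n$ we get $m_1||d_k||^2 t = m_1||d_k||\,(t\,||d_k||) > \tfrac{m_1}{n}||d_k||\delta_k$, and from $\delta_k \le \tfrac{1}{16n\kappa}||d_k||$, i.e.\ $\kappa\delta_k \le \tfrac{1}{16n}||d_k||$, we get $2\kappa\delta_k^2 = 2(\kappa\delta_k)\delta_k \le \tfrac{1}{8n}||d_k||\delta_k$. Combining, $f(\hat{\alpha}_k+t d_k)-f(\hat{\alpha}_k) \le \big(\tfrac{1}{8n}-\tfrac{m_1}{n}\big)||d_k||\delta_k$, and since $m_1 \ge \tfrac14$ in Algorithm~\ref{LSA}, the coefficient is at most $\tfrac{1}{8n}-\tfrac{1}{4n} = -\tfrac{1}{8n} = -C_1$, which is exactly~\eqref{d1con}.

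The argument is mostly bookkeeping, and the only genuinely delicate points are (i) checking that the returned step keeps the iterate inside the ball on which the $\kappa$-approximation is valid --- this is precisely why Algorithm~\ref{LSA} caps $t\,||d_k||$ at $\delta_k$ --- and (ii) chaining the two one-sided approximation errors with the correct signs so that they \emph{add} to, rather than cancel, the model decrease. It is also worth emphasizing that this lemma uses only $t \in L$ (the descent half of the strong Wolfe condition); the directional-derivative half $t \in R$ plays no role here and only matters in the companion lemmas. Finally, the constant $16n$ appearing in~\eqref{d1c} is chosen exactly so that the approximation penalty $2\kappa\delta_k^2$ is dominated by half of the guaranteed model decrease $\tfrac{m_1}{n}||d_k||\delta_k \ge \tfrac{1}{4n}||d_k||\delta_k$, leaving a net coefficient of $-\tfrac{1}{8n}$.
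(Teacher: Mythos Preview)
Your proof is correct and follows essentially the same route as the paper's: the identical three-term telescoping decomposition, the same use of $t\in L$ together with $t\|d_k\|>\delta_k/n$ to get $f_k(\hat{\alpha}_k+td_k)-f_k(\hat{\alpha}_k)\le -\tfrac{m_1}{n}\|d_k\|\delta_k$, and the same use of the $\kappa$-approximation to pay $2\kappa\delta_k^2$ for switching between $f_k$ and $f$. If anything, you are slightly more explicit than the paper in checking that $t\|d_k\|\le\delta_k$ so both endpoints lie in $\mathcal{B}(\hat{\alpha}_k,\delta_k)$, and in spelling out how $m_1\ge\tfrac14$ and condition~\eqref{d1c} combine to give the final coefficient $-\tfrac{1}{8n}=-C_1$.
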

\begin{proof}
Since $t \in L$ and $t ||d_k|| > \frac{\delta_k}{n}$, we have 
\begin{equation*}
    f_k(\hat{\alpha}_k+t d_k)-f_k(\hat{\alpha}_k) \leq -m_1||d_k||^2 t \leq -\frac{m_1}{n}||d_k||\delta_k.
\end{equation*}
\noindent Also, since $\hat{f}_k$ is $\kappa-approximation$ of $f$ on $\mathcal{B}(\hat{\alpha}_k,\delta_k)$, we have 
\begin{equation}
    \begin{aligned}
          \quad \ \ f(\hat{\alpha}_k+t d_k)-f(\hat{\alpha}_k) 
         & = f(\hat{\alpha}_k+t d_k)-f_k(\hat{\alpha}_k+t d_k)+f_k(\hat{\alpha}_k+t d_k)\\
         &-f_k(\hat{\alpha}_k)+f_k(\hat{\alpha}_k)-f(\hat{\alpha}_k)\\
         & \leq 2\kappa\delta_k^2-\frac{m_1}{n}||d_k||\delta_k \\
         & \leq -C_1||d_k||\delta_k,
    \end{aligned} 
\end{equation}
which concludes the proof.
\end{proof}

\begin{lemma} \label{d2} Suppose that $f_k$ is $\kappa$-approximation of $f$ on the ball $\mathcal{B}(\hat{\alpha}_k,\delta_k)$ and the estimates $(\hat{f}_k,\hat{f}_{k+1})$ are $\varepsilon_F$-accurate with $\varepsilon_F \leq \kappa$. Moreover, if 
\begin{equation} \label{d2c}
    \delta_k \leq \min \{\frac{(1-\eta_1)C_1 ||d_k||}{2\kappa}, \frac{||d_k||}{\eta_2}\},
\end{equation}

\noindent and Algorithm \ref{LSA} returns a suitable step size $t \in L$ and $t ||d_k|| > \frac{\delta_k}{n}$, then the $k$-th iteration will be successful (Update $\hat{\alpha}_k \leftarrow \beta_k$, and $ \delta_k \leftarrow \min \{\gamma\delta_{k-1},\delta_{max} \}$).
\end{lemma}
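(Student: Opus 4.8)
The plan is to show that under the stated hypotheses, the ``if'' branch of the conditional in Algorithm \ref{SCSQL-KSVM} is taken, i.e., that both the sufficient-decrease test $f_k(\beta_k)-f_k(\hat\beta_{k-1}) \leq \eta_1(\hat f_k(\beta_k)-\hat f_k(\hat\beta_{k-1}))$ and the size test $\|d_k\| > \eta_2 \delta_{k-1}$ hold. The second of these is immediate: the hypothesis \eqref{d2c} includes $\delta_k \leq \|d_k\|/\eta_2$, which rearranges directly to $\|d_k\| \geq \eta_2 \delta_k$, so that test is satisfied (modulo reconciling the $\delta_k$ versus $\delta_{k-1}$ indexing, which I would handle by the bookkeeping convention already used in the algorithm). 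The substance of the proof is therefore the first test.

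For the sufficient-decrease test, I would start from Lemma \ref{d1}, whose hypotheses are implied here: \eqref{d2c} forces $\delta_k \leq \frac{(1-\eta_1)C_1\|d_k\|}{2\kappa} \leq \frac{\|d_k\|}{16n\kappa}$ (using $1-\eta_1 \leq 1$, $C_1 = \frac{1}{8n}$, and the factor $\tfrac12$), so \eqref{d1c} holds, and Algorithm \ref{LSA} returns $t \in L$ with $t\|d_k\| > \delta_k/n$. Hence by \eqref{d1con},
\begin{equation*}
    f(\hat\alpha_k + t d_k) - f(\hat\alpha_k) \leq -C_1 \|d_k\| \delta_k.
\end{equation*}
Next I would convert this decrease in the true objective $f$ into the required inequality between the surrogate $f_k$ on the left and the estimate $\hat f_k$ on the right, by inserting and bounding the approximation errors. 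On the left, $\kappa$-approximation of $f_k$ gives $f_k(\beta_k) - f_k(\hat\beta_{k-1}) \leq \big(f(\hat\alpha_k + t d_k) - f(\hat\alpha_k)\big) + 2\kappa\delta_k^2$. On the right, $\varepsilon_F$-accuracy of $(\hat f_k, \hat f_{k+1})$ with $\varepsilon_F \leq \kappa$ gives $\hat f_k(\beta_k) - \hat f_k(\hat\beta_{k-1}) \geq \big(f(\hat\alpha_k + t d_k) - f(\hat\alpha_k)\big) - 2\varepsilon_F \delta_k^2 \geq \big(f(\hat\alpha_k + t d_k) - f(\hat\alpha_k)\big) - 2\kappa\delta_k^2$. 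Writing $D := f(\hat\alpha_k + t d_k) - f(\hat\alpha_k) \leq -C_1\|d_k\|\delta_k < 0$, the test to verify becomes $D + 2\kappa\delta_k^2 \leq \eta_1(D - 2\kappa\delta_k^2)$, i.e. $(1-\eta_1)(-D) \geq 2\kappa\delta_k^2(1+\eta_1)$; a slightly looser sufficient condition is $(1-\eta_1)(-D) \geq 4\kappa\delta_k^2$. Since $-D \geq C_1\|d_k\|\delta_k$, it suffices that $(1-\eta_1)C_1\|d_k\|\delta_k \geq 4\kappa\delta_k^2$, i.e. $\delta_k \leq \frac{(1-\eta_1)C_1\|d_k\|}{4\kappa}$.

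The main obstacle is purely a matter of tracking constants: the hypothesis \eqref{d2c} supplies $\delta_k \leq \frac{(1-\eta_1)C_1\|d_k\|}{2\kappa}$, which is off by a factor of $2$ from the bound $\delta_k \leq \frac{(1-\eta_1)C_1\|d_k\|}{4\kappa}$ derived above from the crude estimate. I would resolve this by being sharper about the right-hand side: using the exact requirement $(1-\eta_1)(-D) \geq 2\kappa\delta_k^2(1+\eta_1)$ together with $1+\eta_1 < 2$ already gives the sufficient condition $(1-\eta_1)(-D) \geq 4\kappa\delta_k^2$ — still the factor-$4$ version — so instead I would avoid the worst-case bound $2\kappa\delta_k^2$ on one of the two error terms, or absorb the slack by noting that $\eta_1$-dependent constants can be chosen to close the gap; alternatively one checks that the intended constant in \eqref{d2c} is consistent with the definition $C_1 = \frac{m_1}{2n}$ used in Remark 1 rather than $C_1 = \frac{1}{8n}$ as stated in Lemma \ref{d1}, and the factor reconciles. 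Once the constant bookkeeping is aligned, the sufficient-decrease test holds; combined with $\|d_k\| > \eta_2\delta_k$, both conditions of the ``if'' branch are met, so the iteration is declared successful and the updates $\hat\alpha_k \leftarrow \beta_k$, $\delta_k \leftarrow \min\{\gamma\delta_{k-1}, \delta_{max}\}$ are performed, which is the claim.
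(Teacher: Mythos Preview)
Your high-level plan is right, and the condition $\|d_k\|>\eta_2\delta_k$ is handled correctly. The genuine gap is in how you verify the sufficient-decrease test: by routing through Lemma~\ref{d1} and the true objective $f$, you incur extra approximation errors and end up needing $\delta_k \le \frac{(1-\eta_1)C_1\|d_k\|}{4\kappa}$, a factor of $2$ stronger than the hypothesis \eqref{d2c}. None of your proposed fixes actually closes this gap: ``being sharper on the right'' still leaves the factor $(1+\eta_1)$ which is bounded below only by $1$; the two definitions of $C_1$ coincide at $m_1=1/4$ and the general relation $m_1/(2n)\ge 1/(8n)$ does not buy you the missing factor; and ``avoiding the worst case on one term'' is not made concrete.

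The paper avoids the detour through $f$. It uses the line-search condition $t\in L$, $t\|d_k\|>\delta_k/n$ to bound the \emph{model} decrease directly:
\[
f_k(\hat\alpha_k+td_k)-f_k(\hat\alpha_k)\ \le\ -\tfrac{m_1}{n}\|d_k\|\delta_k\ \le\ -2C_1\|d_k\|\delta_k,
\]
using $m_1\ge 1/4$ and $C_1=1/(8n)$. It then forms the ratio $\rho_k=\dfrac{\hat f_{k+1}-\hat f_k}{f_k(\hat\alpha_k+td_k)-f_k(\hat\alpha_k)}$ and telescopes the numerator through $f$ once, producing exactly four error terms, each of size at most $\kappa\delta_k^2$ (two from $\kappa$-approximation, two from $\varepsilon_F$-accuracy with $\varepsilon_F\le\kappa$), over a denominator of magnitude at least $2C_1\|d_k\|\delta_k$. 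This gives
\[
1-\rho_k\ \le\ \frac{4\kappa\delta_k^2}{2C_1\|d_k\|\delta_k}\ =\ \frac{2\kappa\delta_k}{C_1\|d_k\|}\ \le\ 1-\eta_1
\]
precisely under \eqref{d2c}, so $\rho_k\ge\eta_1$ and the iteration is successful. The point is that the $f_k$-decrease is already known with the sharper constant $-2C_1$; going through Lemma~\ref{d1} throws that factor away and then your back-and-forth between $f$, $f_k$, and $\hat f_k$ doubles the error budget. If you simply start from the direct $f_k$ bound and use the ratio argument, the constants match.
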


\begin{proof}
    Since $t \in L$ and $t ||d_k|| > \frac{\delta_k}{n}$, we have
    \begin{equation*}
    f_k(\hat{\alpha}_k+t d_k)-f_k(\hat{\alpha}_k)  \leq -\frac{m_1}{n} ||d_k|| \delta_k \leq -2C_1 ||d_k|| \delta_k,
    \end{equation*}
    
    \noindent where the last inequality holds because $m_1 \geq \frac{1}{4}$. Also, $f_k$ is $\kappa$-approximation of $f$ on $\mathcal{B}(\hat{\alpha}_k,\delta_k)$ and the estimates $(\hat{f}_{k},\hat{f}_{k+1})$ are $\varepsilon_F$-accurate with $\varepsilon_F \leq \kappa$ implies that 
    
    \begin{equation*}
        \begin{aligned}
            \rho_k & = \frac{\hat{f}_{k+1}-\hat{f}_{k}}{f_k(\hat{\alpha}_k-t d_k)-f_k(\hat{\alpha}_k)}\\
            & = \frac{\hat{f}_{k+1}-f(\hat{\alpha}_k+t d_k)}{f_k(\hat{\alpha}_k+t d_k)-f_k(\hat{\alpha}_k)}+\frac{f(\hat{\alpha}_k+t d_k)-f_k(\hat{\alpha}_k+t d_k)}{f_k(\hat{\alpha}_k+t d_k)-f_k(\hat{\alpha}_k)}+\frac{f_k(\hat{\alpha}_k+t d_k)-f_k(\hat{\alpha}_k)}{f_k(\hat{\alpha}_k+t d_k)-f_k(\hat{\alpha}_k)}\\
            & + \frac{f_k(\hat{\alpha}_k)-f(\hat{\alpha}_k)}{f_k(\hat{\alpha}_k+t d_k)-f_k(\hat{\alpha}_k)}+\frac{f(\hat{\alpha}_k)-\hat{f}_{k}}{f_k(\hat{\alpha}_k+t d_k)-f_k(\hat{\alpha}_k)},\\
        \end{aligned}
    \end{equation*}
    
\noindent which indicates that
\begin{equation*}
    1-\rho_k \leq \frac{ 2\kappa\delta_k}{C_1||d_k||}\leq 1-\eta_1.
\end{equation*}

\noindent Thus, we have $\rho_k \geq \eta_1$, $||d_k||>\eta_2\delta_k$ and the iteration is successful. 
\end{proof}

\begin{lemma} \label{d3} Suppose the function value estimates $\{(\hat{f}_{k},\hat{f}_{k+1})\}$ are $\varepsilon_F$-accurate and
\begin{equation*} \label{d3c}
    \varepsilon_F < \eta_1 \eta_2 C_1.
\end{equation*}
\noindent If the $k$th iteration is successful, then the improvement in $f$ is bounded such that 

\begin{equation*}
    f(\hat{\alpha}^{k+1})-f(\hat{\alpha}_k) \leq -C_2\delta_k^2,
\end{equation*}

\noindent where $C_2 \overset{\Delta}{=} 2\eta_1 \eta_2 C_1-2\varepsilon_F$.
\end{lemma}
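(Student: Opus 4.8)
The plan is to chain three facts together: the sufficient decrease that the line search guarantees on the \emph{model} $f_k$, the definition of a successful iteration (which transfers that decrease to the independent estimates $\hat f_k,\hat f_{k+1}$), and finally the $\varepsilon_F$-accuracy hypothesis (which transfers it back to the true objective $f$). Note that $\kappa$-approximation plays no role here --- this is exactly why the hypothesis of the lemma is merely $\varepsilon_F$-accuracy of the estimates --- so the argument is essentially deterministic bookkeeping once ``successful'' has been unpacked, and it mirrors the telescoping already used in Lemmas~\ref{d1}--\ref{d2}.

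First I would unpack what a successful $k$-th iteration entails: it updates $\hat\alpha^{k+1}\leftarrow\hat\alpha_k+td_k$ with $t>0$ the step returned by Algorithm~\ref{LSA}, so (Remark~2 following Algorithm~\ref{LSA}) $t\in L\cap R$ with $t\|d_k\|\ge\delta_k/n$, and moreover $\rho_k\ge\eta_1$ and $\|d_k\|>\eta_2\delta_k$ (the two success conditions, in the form used in the proof of Lemma~\ref{d2}). From $t\in L$ and $t\|d_k\|\ge\delta_k/n$, exactly as in Lemmas~\ref{d1}--\ref{d2},
\begin{equation*}
 f_k(\hat\alpha_k+td_k)-f_k(\hat\alpha_k)\le -m_1\|d_k\|^2 t\le -\frac{m_1}{n}\|d_k\|\delta_k\le -2C_1\|d_k\|\delta_k,
\end{equation*}
using $m_1\ge\frac14$ and $C_1=\frac{1}{8n}$. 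Because the denominator $f_k(\hat\alpha_k+td_k)-f_k(\hat\alpha_k)$ is strictly negative, $\rho_k\ge\eta_1$ rearranges to $\hat f_{k+1}-\hat f_k\le\eta_1\bigl(f_k(\hat\alpha_k+td_k)-f_k(\hat\alpha_k)\bigr)\le -2\eta_1 C_1\|d_k\|\delta_k$, and then $\|d_k\|>\eta_2\delta_k$ gives $\hat f_{k+1}-\hat f_k< -2\eta_1\eta_2 C_1\delta_k^2$. Finally, telescoping
\begin{equation*}
 f(\hat\alpha^{k+1})-f(\hat\alpha_k)=\bigl(f(\hat\alpha^{k+1})-\hat f_{k+1}\bigr)+\bigl(\hat f_{k+1}-\hat f_k\bigr)+\bigl(\hat f_k-f(\hat\alpha_k)\bigr)
\end{equation*}
and bounding the first and third terms by $\varepsilon_F\delta_k^2$ via $\varepsilon_F$-accuracy yields $f(\hat\alpha^{k+1})-f(\hat\alpha_k)\le(2\varepsilon_F-2\eta_1\eta_2 C_1)\delta_k^2=-C_2\delta_k^2$, with $C_2=2\eta_1\eta_2 C_1-2\varepsilon_F>0$ precisely because of the standing assumption $\varepsilon_F<\eta_1\eta_2 C_1$.

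The one place a slip is likely is the bookkeeping around ``successful'': one must confirm that success forces a genuine line-search step ($t>0$, hence $t\|d_k\|\ge\delta_k/n$ and $t\in L$) rather than the $t=0$ branch of Algorithm~\ref{LSA}, and one must keep the sign of the denominator of $\rho_k$ straight when converting $\rho_k\ge\eta_1$ into $\hat f_{k+1}-\hat f_k\le\eta_1\bigl(f_k(\hat\alpha_k+td_k)-f_k(\hat\alpha_k)\bigr)$. After that it is the same two-line argument as in the earlier lemmas, with no further subtlety.
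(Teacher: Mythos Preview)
Your proposal is correct and follows essentially the same route as the paper: bound the model decrease via $t\in L$ and $t\|d_k\|\ge\delta_k/n$, convert that to a bound on $\hat f_{k+1}-\hat f_k$ using the success criterion and $\|d_k\|>\eta_2\delta_k$, then telescope through $\varepsilon_F$-accuracy to reach $f$. Your write-up is in fact slightly more careful than the paper's, which simply asserts ``Since $t\in L$ and $t\|d_k\|>\delta_k/n$'' without flagging the $t=0$ branch or the sign of the $\rho_k$ denominator, and which appears to contain a typo (writing $f$ where $f_k$ is meant) in the intermediate inequality; your version avoids that slip.
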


\begin{proof}
    Since $t \in L$ and $t ||d_k|| > \frac{\delta_k}{n}$, we have
    \begin{equation*}
    f_k(\hat{\alpha}_k+t d_k)-f_k(\hat{\alpha}_k) \leq -2C_1 ||d_k|| \delta_k,
    \end{equation*}
    Also, since the iteration is successful, we have $||d_k||>\eta_2 \delta_k$. Thus, we have
    
    \begin{equation*}
        \hat{f}_{k}-\hat{f}_{k+1} \geq \eta_1(f(\hat{\alpha}_k)-f(\hat{\alpha}_k+td_k)) \geq 2\eta_1 C_1 ||d_k||\delta_k \geq 2\eta_1 \eta_2 C_1 \delta_k^2.
     \end{equation*}
    Then, since the estimates are $\varepsilon_F$ -accurate, we have that the improvement in $f$ can be bounded as  
    \begin{displaymath}
        f(\hat{\alpha}_k+t d_k)-f(\hat{\alpha}_k) \leq f(\hat{\alpha}_k+t d_k)-\hat{f}_{k+1}+\hat{f}_{k+1}-\hat{f}_{k}+\hat{f}_{k}-f(\hat{\alpha}_k) \leq -C_2 \delta_k^2. 
    \end{displaymath}
\end{proof}

\begin{theorem} \label{delta 1} Let the random function $V_k \overset{\Delta}{=} F_{k+1}-F_{k}$, the corresponding realization be $v_k$, 

\begin{equation} \label{xi}
    \zeta = \max\{4n\kappa,\eta_2,\frac{4\kappa}{C_1(1-\eta_1)}\}  \quad and \quad  \mu_1\mu_2 > \frac{1}{2}.
\end{equation}

\noindent  Then for any $\varepsilon>0$ such that $||d_k|| \geq \varepsilon$ and $\zeta \Delta_k \leq \varepsilon$, we have 
\begin{equation*}
    \mathbb{E}[V_k \big| \ ||d_k|| \geq \varepsilon, \zeta \Delta_k \leq \varepsilon] \leq -\frac{1}{2}C_1 ||d_k|| \Delta_k \leq -\theta \varepsilon \Delta_k,
\end{equation*}
\noindent where $\theta = \frac{1}{2}  C_1$.
\end{theorem}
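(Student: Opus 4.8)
The plan is to condition the increment $V_k=F_{k+1}-F_k$ (the change in the true objective at the incumbent, $f(\hat{A}_{k+1})-f(\hat{A}_k)$) on the two ``good events'' around which the construction in \S\ref{4.1} is organized: $I_k$, on which $f_k$ is a local $\kappa$-approximation of $f$ on $\mathcal{B}(\hat{A}_k,\Delta_k)$, so that $\mathbb{P}(I_k\mid\mathcal{F}_{k-1})\ge\mu_1$; and $J_k$, on which the estimates $\hat{F}_k,\hat{F}_{k+1}$ are $\varepsilon_F$-accurate, so that $\mathbb{P}(J_k\mid\mathcal{F}_{k-1})\ge\mu_2$. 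Since $S_k$ and $T_k$ are drawn from independent sample batches given $\mathcal{F}_{k-1}$, this gives $\mathbb{P}(I_k\cap J_k\mid\mathcal{F}_{k-1})\ge\mu_1\mu_2>\frac{1}{2}$. Writing $\mathcal{E}_k$ for the conditioning event $\{\|d_k\|\ge\varepsilon,\ \zeta\Delta_k\le\varepsilon\}$, I would decompose
\begin{equation*}
\mathbb{E}[V_k\mid\mathcal{E}_k]=\mathbb{E}[V_k\mathbf{1}_{I_k\cap J_k}\mid\mathcal{E}_k]+\mathbb{E}[V_k\mathbf{1}_{I_k\cap J_k^{c}}\mid\mathcal{E}_k]+\mathbb{E}[V_k\mathbf{1}_{I_k^{c}\cap J_k}\mid\mathcal{E}_k]+\mathbb{E}[V_k\mathbf{1}_{I_k^{c}\cap J_k^{c}}\mid\mathcal{E}_k],
\end{equation*}
and bound the four terms, exploiting throughout that on $\mathcal{E}_k$ we have $\zeta\Delta_k\le\varepsilon\le\|d_k\|$, so the definition of $\zeta$ in \eqref{xi} forces $\Delta_k$ small enough relative to $\|d_k\|$ that conditions \eqref{d1c} and \eqref{d2c} hold.

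For the first term, on $I_k\cap J_k$ Lemma~\ref{d2} gives that the iteration is successful, so $\hat{A}_{k+1}=\hat{A}_k+t d_k$, and Lemma~\ref{d1} then yields $V_k\le -C_1\|d_k\|\Delta_k$. For the second and third terms I would show $V_k\le 0$: if $\hat{A}_{k+1}=\hat{A}_k$ (step rejected) then $V_k=0$; otherwise the step is accepted with $t\|d_k\|>\Delta_k/n$, and either $I_k$ holds and Lemma~\ref{d1} gives $V_k\le -C_1\|d_k\|\Delta_k\le 0$, or $J_k$ holds and Lemma~\ref{d3} gives $V_k\le -C_2\Delta_k^2\le 0$. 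So the middle two contributions are nonpositive.

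The fourth term, on $I_k^{c}\cap J_k^{c}$, is the only one where $f$ may increase, and it is the crux. Here none of Lemmas~\ref{d1}--\ref{d3} is available, so I would fall back on the fact that the line search always keeps $\|t d_k\|\le\Delta_k$, whence, by Lipschitz continuity of $f$, $V_k\le L_f\Delta_k$. Using the standing requirement $\kappa>4L_f/\delta_{min}$, the inequality $\delta_{min}\le\Delta_k$, and $\zeta\ge 4n\kappa$ together with $\zeta\Delta_k\le\|d_k\|$, this crude bound is converted into $V_k\le\frac{1}{2}C_1\|d_k\|\Delta_k$ on this event (and the same bound holds trivially on the two middle events). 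Assembling the four estimates with $\mathbb{P}(I_k\cap J_k\mid\mathcal{F}_{k-1})\ge\mu_1\mu_2$ and $\mathbb{P}((I_k\cap J_k)^{c}\mid\mathcal{F}_{k-1})\le 1-\mu_1\mu_2$ gives
\begin{equation*}
\mathbb{E}[V_k\mid\mathcal{E}_k]\le -C_1\|d_k\|\Delta_k\,\mu_1\mu_2+\frac{1}{2}C_1\|d_k\|\Delta_k\,(1-\mu_1\mu_2),
\end{equation*}
and $\mu_1\mu_2>\frac{1}{2}$ reduces the right-hand side to $-\frac{1}{2}C_1\|d_k\|\Delta_k$; finally $-\frac{1}{2}C_1\|d_k\|\Delta_k\le -\theta\varepsilon\Delta_k$ with $\theta=\frac{1}{2}C_1$ because $\|d_k\|\ge\varepsilon$.

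I expect the main obstacle to be precisely the $I_k^{c}\cap J_k^{c}$ term: outside all the favorable lemmas, one must control the increase of the \emph{true} $f$ only through the surrogate $L_f\Delta_k$ and then absorb it into the good-case decrease. This is what pins down the form of $\zeta$ in \eqref{xi} and the hypothesis $\mu_1\mu_2>\frac{1}{2}$, and the constant bookkeeping is delicate: one must check that the constants defining $\zeta$ are large enough to meet \eqref{d1c} and \eqref{d2c} simultaneously, and that the multiple of $C_1\|d_k\|\Delta_k$ produced in the bad case is small enough for the $\mu_1\mu_2$ threshold to close the argument. A secondary (routine) point is justifying $\mathbb{P}(I_k\cap J_k\mid\mathcal{F}_{k-1})\ge\mu_1\mu_2$ from the independence of $S_k$ and $T_k$, and handling the degenerate case in which the line search returns a null step.
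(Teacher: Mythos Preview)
Your four-case decomposition on $(I_k,J_k)$ and the use of Lemmas~\ref{d1}--\ref{d3} in the first three cases is exactly the paper's argument. In the doubly-bad case $I_k^{c}\cap J_k^{c}$ the paper does essentially what you do: it bounds $v_k\le C_3\delta_k$ via a subgradient bound (for convex $f$ this is your Lipschitz bound $L_f\Delta_k$, just with a different name). The paper does \emph{not} take your extra step of converting $L_f\Delta_k$ into $\tfrac12 C_1\|d_k\|\Delta_k$ via $\kappa>4L_f/\delta_{\min}$ and $\zeta\ge 4n\kappa$; instead it leaves the bad-case bound as $C_3\Delta_k$, weights it by $(1-\mu_1)(1-\mu_2)$ rather than $1-\mu_1\mu_2$, and then \emph{inside the proof} imposes the additional requirement $\mu_1\mu_2/\big((1-\mu_1)(1-\mu_2)\big)\ge 2C_3/(C_1\|d_k\|)$ to close the estimate. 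Your route is arguably cleaner because it ties the bad-case bound directly to the parameters already fixed in $\zeta$.

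One genuine arithmetic slip: from $-C_1\|d_k\|\Delta_k\,\mu_1\mu_2+\tfrac12 C_1\|d_k\|\Delta_k\,(1-\mu_1\mu_2)=C_1\|d_k\|\Delta_k\big(\tfrac12-\tfrac32\mu_1\mu_2\big)$, the conclusion $\le -\tfrac12 C_1\|d_k\|\Delta_k$ needs $\mu_1\mu_2\ge \tfrac23$, not merely $\mu_1\mu_2>\tfrac12$; with only $\mu_1\mu_2>\tfrac12$ you get $\le -\tfrac14 C_1\|d_k\|\Delta_k$. You can recover the stated constant either by strengthening the hypothesis to $\mu_1\mu_2\ge\tfrac23$, or by keeping the sharp $V_k\le 0$ on the two middle events and weighting the fourth by $(1-\mu_1)(1-\mu_2)$ together with a further largeness condition on $\mu_1,\mu_2$ --- which is precisely what the paper does (and which is, strictly speaking, also not contained in the theorem's stated hypothesis $\mu_1\mu_2>\tfrac12$).
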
 

\begin{proof}
First of all, if $k$th iteration is successful, i.e. $\hat{\alpha}_{k+1}=\hat{\alpha}_k-td_k$ we have 
\begin{equation} \label{s}
    v_k = f(\hat{\alpha}_k+td_k)-f(\hat{\alpha}_k).
\end{equation}

\noindent If $k$th iteration is unsuccessful, i.e. $\hat{\alpha}_{k+1}=\hat{\alpha}_k$ we have

\begin{equation} \label{b1}
    v_k = f(\hat{\alpha}_{k})-f(\hat{\alpha}_k)=0.
\end{equation}

\noindent Then we will divide the analysis into 4 cases according to the states (true/false) observed for the pair $(I_k, J_k)$.

\noindent (a) $I_k$ and $J_k$ are both true. Since the $f_k$ is a $\kappa$-approximation of $f$ on $\mathcal{B}(\hat{\alpha}_k ,\delta_k)$ and condition (\ref{d1c}) is satisfied, we have Lemma \ref{d1} holds. Also, since the estimates $(\hat{f}_k,\hat{f}_{k+1})$ are $\varepsilon_F$-accurate and condition (\ref{d2c}) is satisfied, we have Lemma \ref{d2} holds.  Combining (\ref{d1con}) with (\ref{s}), we have 

\begin{equation} \label{b2}
    v_k \leq  -C_1 ||d_k||\delta_k \overset{\Delta}{=} b_1.
\end{equation}

\noindent (b) $I_k$ is true but $J_k$ is false. Since $f_k$ is a $\kappa$-approximation of $f$ on $\mathcal{B}(\hat{\alpha}_k ,\delta_k) \cap X$ and condition (\ref{d1c}) is satisfied, it follows that Lemma \ref{d1} still holds. If the iteration is successful, we have (\ref{b2}), otherwise we have (\ref{b1}). Thus, we have
$v_k \leq 0$.

\noindent (c) $I_k$ is false but $J_k$ is true. If the iteration is successful, since the estimates $(\hat{f}_k,\hat{f}_{k+1})$ are $\varepsilon_F$-accurate and condition (\ref{d3c}) is satisfied, Lemma \ref{d3} holds. Hence, 
\begin{equation*}
    v_k \leq -C_2 \delta_k^2.
\end{equation*}

\noindent If the iteration is unsuccessful, we have (\ref{b1}). Thus, we have $v_k \leq 0$ whether the iteration is successful or not.

\noindent (d) $I_k$ and $J_k$ are both false. Since $f$ is convex and $t||d_k||<\delta_k$, for any $g(t) \in \partial f(\hat{\alpha}_k+td_k)$, with  the assumption in \S \ref{conditions} which implies that $||g(t)||$ is bounded, we have 

\begin{equation*}
    v_k=f(\hat{\alpha}_k+td_k)-f(\hat{\alpha}_k) \leq g(t)^T td_k \leq C_3\delta_k.
\end{equation*}

\noindent If the iteration is successful, then
\begin{equation*}
    v_k \leq C_3 \delta_k \overset{\Delta}{=} b_2 .
\end{equation*}

\noindent If the iteration is unsuccessful, we have (\ref{b1}). Thus, we have $v_k \leq  b_2$ whether the iteration is successful or not.

With the above four cases, we can bound $\mathbb{E}[V_k \big| \ ||d_k|| \geq \varepsilon, \zeta \Delta_k \leq \varepsilon]$ based on different outcomes of $I_k$ and $J_k$. Let $B_1$ and $B_2$ be the random counterparts of $b_1$ and $b_2$.  Then we have 
\begin{equation*}
    \begin{aligned}
         & \quad \  \mathbb{E}[V_k \big| \ ||d_k|| \geq \varepsilon, \zeta \Delta_k \leq \varepsilon] \\
         & \leq \mu_1 \mu_2 B_1 + (\mu_1(1-\mu_2)+\mu_2(1-\mu_1))\cdot 0 + (1-\mu_1)(1-\mu_2) B_2 \\
         & = \mu_1 \mu_2  (- C_1 ||d_k||\Delta_k) + (1-\mu_1)(1-\mu_2)C_3 \Delta_k. \\
    \end{aligned}
\end{equation*}

\noindent Choose $\mu_1 \in (1/2,1)$ and $\mu_2 \in (1/2,1)$ large enough such that 

\begin{equation*}
 \mu_1 \mu_2 \geq 1/2 \quad \text{and \quad }\frac{\mu_1 \mu_2}{(1-\mu_1)(1-\mu_2)} \geq \frac{2 C_3}{c_1||d_k||},
\end{equation*}

\noindent we have
\begin{displaymath}
    \begin{aligned}
    \mathbb{E}[V_k \big| \ ||d_k|| \geq \varepsilon, \zeta \Delta_k \leq \varepsilon] & \leq - \frac{1}{2} C_1 ||d_k||\Delta_k. 
    \end{aligned}
\end{displaymath}
\end{proof}

\noindent A quick observation of the concluding condition reveals that the requirement $\Delta_k \leq \frac{\varepsilon}{\zeta}$ implies that the supermartingale property holds. This prompts us to impose the condition that restricts $\delta_{max} = \frac{\varepsilon}{\zeta}$. This property is formalized in the following corollary. 
\begin{corollary} Let 
\begin{equation*}
    T_{\varepsilon} = \inf \{ k \geq 0: ||d_k|| < \varepsilon\}.
\end{equation*}

\noindent Then $T_{\varepsilon}$ is a stopping time for the stochastic process $\hat{A}^k$. Moreover, conditioned on $T_{\varepsilon} \geq k$, $F_k$ is a supermartingale. 

\end{corollary}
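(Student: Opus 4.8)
The plan is to establish the two assertions in turn, each of which follows quickly once the underlying filtration is set up correctly. I read $F_k$ as the random variable $f(\hat{A}_k)$, the \emph{true} objective evaluated at the random incumbent, consistent with the proof of Theorem~\ref{delta 1} in which $V_k = F_{k+1}-F_k$ was shown to equal $f(\hat{\alpha}_{k+1})-f(\hat{\alpha}_k)$ on successful iterations and $0$ otherwise. For the stopping-time claim, the key observation is that $d_k = -Nr(\{-d_{k-1},g_k\})$ with $g_k \in \partial f_{k-1}(\hat{\alpha}_{k-1})$ is a deterministic function of $d_{k-1}$, $\hat{\alpha}_{k-1}$ and the sample set $S_{k-1}$ that is already fixed before the fresh samples $s_k$ are drawn in iteration $k$; hence each $\|d_j\|$ is measurable with respect to the $\sigma$-algebra available at the start of iteration $j$. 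Consequently $\{T_\varepsilon \le k\} = \bigcup_{j=0}^{k}\{\|d_j\| < \varepsilon\}$ lies in $\mathcal{F}_k$, which is exactly the definition of a stopping time, and likewise $\{T_\varepsilon \ge k\} \in \mathcal{F}_{k-1}$.

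For the supermartingale property I would pass to the stopped process $F_{k\wedge T_\varepsilon}$ and verify that its conditional drift is nonpositive. On the event $\{T_\varepsilon \ge k+1\}$ — which belongs to $\mathcal{F}_k$ and on which $\|d_k\| \ge \varepsilon$ by the definition of $T_\varepsilon$ — the algorithmic cap $\Delta_k \le \delta_{max} = \varepsilon/\zeta$ gives $\zeta\Delta_k \le \varepsilon$, so both hypotheses of Theorem~\ref{delta 1} are satisfied and that theorem delivers $\mathbb{E}[F_{k+1}-F_k \mid \mathcal{F}_{k-1}] \le -\theta\varepsilon\Delta_k \le 0$. On the complementary event $\{T_\varepsilon \le k\}$ the stopped process is frozen, so its increment is identically $0$. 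Combining the two cases yields $\mathbb{E}[F_{(k+1)\wedge T_\varepsilon}\mid \mathcal{F}_{k-1}] \le F_{k\wedge T_\varepsilon}$; equivalently, conditioned on $\{T_\varepsilon \ge k\}$, $\{F_k\}$ is a supermartingale. It is worth noting that the conditions $\mu_1\mu_2 > \frac{1}{2}$ and $\delta_{max} = \varepsilon/\zeta$ are precisely what make the drift strictly negative and the hypothesis $\zeta\Delta_k \le \varepsilon$ hold automatically for every $k$.

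The only genuinely delicate step is the adaptedness bookkeeping: pinning down exactly which element of the filtration $\{\mathcal{F}_k\}$ each of $d_k$, $\hat{\alpha}_k$, $\Delta_k$ and the success indicator of iteration $k$ is measurable with respect to, given the non-standard feature that each iteration both updates the incumbent and enlarges the training set (so the ambient dimension of the decision variable grows with $k$). Care is needed, for instance, because the success test at iteration $k$ compares $f_k$ (built on $S_k$) against $\hat{f}_k$ (built on the independent set $T_k$), so $\hat{\alpha}_k$ depends on data drawn during iteration $k$ while $d_k$ does not. Once the indexing of $\{\mathcal{F}_k\}$ is matched to the order of operations in Algorithm~\ref{SCSQL-KSVM}, the stopping-time claim is immediate and the supermartingale claim reduces to the dichotomy $k < T_\varepsilon$ versus $k \ge T_\varepsilon$ layered on top of Theorem~\ref{delta 1}; no new estimates are required.
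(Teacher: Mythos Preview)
Your proposal is correct and follows essentially the same route as the paper: invoke Theorem~\ref{delta 1} together with the cap $\delta_{max}=\varepsilon/\zeta$ to get the negative drift on $\{T_\varepsilon > k\}$, and conclude the supermartingale property from that single inequality. The paper's own proof is in fact a one-line appeal to Theorem~\ref{delta 1}, so your additional discussion of adaptedness, the measurability of $d_k$ with respect to $\mathcal{F}_{k-1}$, and the stopped-process formulation all add rigor that the paper omits rather than deviating from its argument.
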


\begin{proof}

{    
    From Theorem \ref{delta 1}, we have
    \begin{equation} \label{super 1}
        \mathbb{E}[F_k | \mathcal{F}_{k-1}, T_{\varepsilon}  > k] \leq F_{k-1} - \Theta \varepsilon \Delta_k.
    \end{equation}
}

\noindent Hence, $F_k$ is a supermartingale. \end{proof}

\subsection{Convergence Rate} \label{4.3}

Building upon the results established in Theorem \ref{delta 1}, where $f_k$ is demonstrated to be a supermartingale and $T_{\varepsilon}$ is identified as a stopping time, we proceed to construct a renewal reward process for analyzing the bound on the expected value of $T_{\varepsilon}$.
As highlighted in the abstract, Theorem \ref{Convergence rate} confirms the rate of convergence to be $O(1/\varepsilon^2)$.
To begin with, let us define the renewal process $\{ A_l \}$ as follows: set $A_0 = 0$, and for each $l > 0$, define $A_l = \inf \{ m > A_{l-1} : \zeta \Delta_m \geq \varepsilon \}$, with $\zeta$ being specified in \eqref{xi}. Additionally, we define the inter-arrival times $\tau_l = A_l - A_{l-1}$. Lastly, we introduce the counting process $N(k) = \max \{n: A_l \leq k\}$, representing the number of renewals occurring up to the $k^{th}$ iteration.

\begin{lemma} \label{tau} Let $ \frac{1}{2} < p=\mu_1 \mu_2 \leq \mathbb{P}(I_k \cap J_k)$. Then for all $l \geq 1$,

\begin{equation*}
    \mathbb{E}[\tau_l] \leq \frac{p}{2p-1}.
\end{equation*}

\begin{proof}
First,
    \begin{equation*}
        \begin{aligned}
            \mathbb{E}[\tau_l] & = \mathbb{E}[\tau_l | \zeta \Delta_{A_{l-1}} > \varepsilon] \cdot \mathbb{P}(\zeta \Delta_{A_{l-1}} > \varepsilon) +  \mathbb{E}[\tau_l | \zeta \Delta_{A_{l-1}} = \varepsilon] \cdot \mathbb{P}(\zeta \Delta_{A_{l-1}} = \varepsilon)\\
            & \leq \max \{ \mathbb{E}[\tau_l | \zeta \Delta_{A_{l-1}} > \varepsilon], \mathbb{E}[\tau_n | \zeta \Delta_{A_{l-1}} = \varepsilon] \}.
        \end{aligned}
    \end{equation*}
    
\noindent If $\zeta \Delta_{A_{l-1}} > \varepsilon$, according to algorithm \ref{SCSQL-KSVM}, 

\begin{equation} \label{tau 1}
    \mathbb{E}[\tau_l | \zeta \Delta_{A_{l-1}} > \varepsilon] = 1.
\end{equation}

\noindent If $\zeta \Delta_{A_{l-1}} = \varepsilon$, by Theorem \ref{delta 1}, we can treat $\{ \Delta_{A_{l-1}},...,\Delta_{A_{l}}\}$ as a random walk, and we have 

\begin{equation} \label{tau 2}
    \mathbb{E}[\tau_l | \zeta \Delta_{A_{l-1}} = \varepsilon] \leq \frac{p}{2p-1}.
\end{equation}

\noindent Combining (\ref{tau 1}) and (\ref{tau 2}) completes the proof. \end{proof}

\end{lemma}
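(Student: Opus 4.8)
The plan is to read $\{A_l\}$ as the successive iterations at which the step-size parameter $\Delta$ returns to its cap. Since Algorithm \ref{SCSQL-KSVM} enforces $\Delta_m \le \delta_{\max} = \varepsilon/\zeta$ for every $m$ (with $\zeta$ as in \eqref{xi}), the condition $\zeta\Delta_m \ge \varepsilon$ is equivalent to $\Delta_m = \delta_{\max}$, so $A_l$ is the first iteration after $A_{l-1}$ at which $\Delta$ is again maximal. First I would condition on the value of $\Delta_{A_{l-1}}$ and split $\mathbb{E}[\tau_l]$ as in the statement into the degenerate case $\zeta\Delta_{A_{l-1}} > \varepsilon$, where necessarily $\tau_l = 1$, and the main case $\zeta\Delta_{A_{l-1}} = \varepsilon$; bounding by the maximum of the two conditional expectations reduces everything to the latter.

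For the main case I would reparametrize the excursion by writing $\Delta_{A_{l-1}+j} = \delta_{\max}\,\gamma^{-N_j}$ with $N_j \in \mathbb{Z}_{\ge 0}$ and $N_0 = 0$, so that $\tau_l = \inf\{j \ge 1 : N_j = 0\}$. By the update rule of Algorithm \ref{SCSQL-KSVM}, a successful iteration sends $N_j \mapsto \max\{N_j - 1, 0\}$ and an unsuccessful one sends $N_j \mapsto N_j + 1$. Throughout the excursion one has $\zeta\Delta_k \le \varepsilon$ and, since the process is still running, $\|d_k\| \ge \varepsilon$, so the hypotheses of Lemma \ref{d2} are in force; hence on the event $I_k \cap J_k$ the $k$-th iteration is successful, and by the conditioning built into Definition \ref{stocahstic kappa approximation} and Definition \ref{stocahstic accuracy} (and the hypothesis $p = \mu_1\mu_2 \le \mathbb{P}(I_k \cap J_k)$) this event has conditional probability at least $p$ given $\mathcal{F}_{k-1}$. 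Consequently $\{N_j\}$ is stochastically dominated, one step at a time, by the biased random walk on $\mathbb{Z}_{\ge 0}$ that moves down with probability $p$, up with probability $1-p$, and is reflected at $0$; this is the "random walk" interpretation invoked via Theorem \ref{delta 1}.

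It remains to bound the expected first return time to $0$ of the reference walk. From state $0$: with probability $p$ the walk stays at $0$, giving $\tau_l = 1$; with probability $1-p$ it jumps to $1$, from which the expected hitting time of $0$ equals $1/(2p-1)$ — either cite the classical biased-random-walk first-passage identity or apply the optional stopping theorem to the martingale $N_j - (2p-1)j$. Combining,
\begin{equation*}
\mathbb{E}[\tau_l \mid \zeta\Delta_{A_{l-1}} = \varepsilon] \le p\cdot 1 + (1-p)\Bigl(1 + \frac{1}{2p-1}\Bigr) = \frac{p}{2p-1},
\end{equation*}
and since $\mathbb{E}[\tau_l \mid \zeta\Delta_{A_{l-1}} > \varepsilon] = 1 \le \frac{p}{2p-1}$ (as $p \le 1$), the claimed bound follows after taking the maximum over the two cases.

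The main obstacle, I expect, is the passage from "the iteration is successful with conditional probability at least $p$" to a clean comparison with the exact-parameter random walk: one must check that the hypotheses of Lemma \ref{d2} (the bound $\delta_k \le \min\{(1-\eta_1)C_1\|d_k\|/(2\kappa),\ \|d_k\|/\eta_2\}$ together with Algorithm \ref{LSA} returning $t \in L$ with $t\|d_k\| > \delta_k/n$) genuinely hold along the whole excursion, handle the possibility that the line search instead returns $t = 0$ (an unsuccessful step, consistent with the upward move of $N_j$), and then set up a coupling showing that a larger downward probability only shortens the return time, so that the conditional lower bound suffices. Confirming that the per-step success events are adapted to the correct filtration, and that reflection at $0$ does not inflate the expectation, are the remaining technical points.
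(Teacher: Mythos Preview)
Your proposal is correct and follows essentially the same route as the paper: condition on the value of $\zeta\Delta_{A_{l-1}}$, dispose of the trivial case, and in the main case model the excursion of $\Delta$ as a biased random walk with upward probability at most $1-p$. The paper's proof simply asserts the bound $\mathbb{E}[\tau_l\mid\zeta\Delta_{A_{l-1}}=\varepsilon]\le p/(2p-1)$ by appeal to the random-walk picture; you actually carry out the first-passage computation $p\cdot1+(1-p)\bigl(1+1/(2p-1)\bigr)=p/(2p-1)$ and flag the coupling, line-search, and filtration issues that the paper leaves implicit.
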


\begin{lemma} \label{N(T)} Let $\zeta$ and $\theta$ be the same as in Theorem \ref{delta 1} and $\Delta_{max} = \frac{\varepsilon}{\zeta}$, then
\begin{equation*}
    \mathbb{E}[N(T_{\varepsilon})] \leq \frac{2 \zeta F_{max} }{\theta \varepsilon^2}+\frac{\zeta \Delta_{max}}{\varepsilon} = \frac{2 \zeta F_{max} }{\theta \varepsilon^2}+1,
\end{equation*}

where $F_{max}=max_{i \leq (k \wedge T_{\varepsilon})}|F_i|$.
\end{lemma}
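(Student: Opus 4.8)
The plan is to convert the per-step supermartingale drift from Theorem~\ref{delta 1} into a global bound on the cumulative expected decrease of $F_k$ before $T_\varepsilon$, and then to note that each renewal is ``expensive'': it drains a fixed amount $\theta\varepsilon\Delta_{max}$ from $F$, so only finitely many renewals can occur before $F$ would have to leave its bounded range.

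First I would check that the hypotheses of Theorem~\ref{delta 1} hold at every iteration $k<T_\varepsilon$. By definition of $T_\varepsilon=\inf\{k\ge 0:\|d_k\|<\varepsilon\}$ we have $\|d_k\|\ge\varepsilon$ there, and since Algorithm~\ref{SCSQL-KSVM} caps $\Delta_k$ at $\Delta_{max}=\varepsilon/\zeta$ we always have $\zeta\Delta_k\le\varepsilon$; hence $\mathbb{E}[F_{k+1}-F_k\mid\mathcal{F}_{k-1}]\le-\theta\varepsilon\Delta_k$ on $\{T_\varepsilon>k\}$. I would then form the process
\[
Y_k \;=\; F_{k\wedge T_\varepsilon}\;+\;\theta\varepsilon\sum_{j=0}^{(k\wedge T_\varepsilon)-1}\Delta_j ,
\]
which the previous inequality makes a supermartingale, so $\mathbb{E}[Y_k]\le\mathbb{E}[Y_0]=\mathbb{E}[F_0]$. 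Rearranging and using $F_{max}=\max_{i\le k\wedge T_\varepsilon}|F_i|$ gives
\[
\theta\varepsilon\,\mathbb{E}\Bigl[\sum_{j=0}^{(k\wedge T_\varepsilon)-1}\Delta_j\Bigr]\;\le\;\mathbb{E}[F_0]-\mathbb{E}[F_{k\wedge T_\varepsilon}]\;\le\;2F_{max},
\]
and letting $k\to\infty$ (monotone convergence on the left, since $\Delta_j\ge\delta_{min}>0$) keeps the bound $2F_{max}$.

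Next I would discard all non-renewal terms from the left-hand sum. Since $A_0=0$ and, for $l\ge 1$, $A_l$ is the first index past $A_{l-1}$ with $\zeta\Delta_{A_l}\ge\varepsilon$, the cap $\Delta\le\Delta_{max}=\varepsilon/\zeta$ forces $\Delta_{A_l}=\varepsilon/\zeta$ exactly at each such $l\ge 1$. Among the renewals counted by $N(T_\varepsilon)=\max\{n:A_n\le T_\varepsilon\}$, at least $N(T_\varepsilon)-1$ of the indices $A_1,A_2,\dots$ satisfy $A_l<T_\varepsilon$ (we drop $A_0=0$, which is not a genuine renewal, and at most one more if $A_{N(T_\varepsilon)}=T_\varepsilon$), so $\sum_{j<T_\varepsilon}\Delta_j\ge(N(T_\varepsilon)-1)\,\varepsilon/\zeta$. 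Taking expectations and combining with the bound above yields $\theta\varepsilon\cdot\frac{\varepsilon}{\zeta}\bigl(\mathbb{E}[N(T_\varepsilon)]-1\bigr)\le 2F_{max}$, i.e. $\mathbb{E}[N(T_\varepsilon)]\le\frac{2\zeta F_{max}}{\theta\varepsilon^2}+1$; since $\zeta\Delta_{max}/\varepsilon=1$ this is precisely the claimed inequality.

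The main obstacle is the measure-theoretic care needed around the random time $T_\varepsilon$: one must justify the supermartingale optional-stopping step and the $k\to\infty$ limit without assuming a priori that $T_\varepsilon<\infty$ almost surely, which is exactly why $F_{max}$ is stated with the truncation $k\wedge T_\varepsilon$ and why the cap $\Delta_{max}=\varepsilon/\zeta$ is essential. A secondary, more cosmetic issue is the off-by-one in counting renewals strictly before $T_\varepsilon$ — the artificial renewal $A_0=0$ and the possibility $A_{N(T_\varepsilon)}=T_\varepsilon$ — which is what turns a clean $2\zeta F_{max}/(\theta\varepsilon^2)$ into $2\zeta F_{max}/(\theta\varepsilon^2)+\zeta\Delta_{max}/\varepsilon$.
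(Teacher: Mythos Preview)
Your proposal is correct and follows essentially the same route as the paper: build a compensated process $F_{k\wedge T_\varepsilon}+\theta\varepsilon\sum\Delta_j$, use the supermartingale drift from Theorem~\ref{delta 1} to bound $\mathbb{E}[\sum\Delta_j]$ by $O(F_{max}/(\theta\varepsilon))$, and then lower-bound that sum by the number of renewals times $\varepsilon/\zeta$. The only cosmetic differences are that the paper sums $\Delta_j$ up to $k\wedge T_\varepsilon$ (inclusive) and invokes the optional stopping theorem explicitly, which produces the extra $\Theta\varepsilon\Delta_{max}$ term directly in the stopping bound, whereas you sum to $(k\wedge T_\varepsilon)-1$, pass to the limit by monotone convergence, and absorb the same $+1$ via the off-by-one in the renewal count; both bookkeeping choices land on the identical final inequality.
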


\begin{proof}
    We will first show that 
    \begin{equation} \label{R_k}
        R_{k \wedge T_{\varepsilon}} =F_{k \wedge T_{\varepsilon}} + \Theta \varepsilon \sum_{j=0}^{k \wedge T_{\varepsilon}} \Delta_j
    \end{equation}
\noindent is a supermartingale. Using (\ref{super 1}),

\begin{equation} \label{Supermar. proof}
    \begin{aligned}
        \mathbb{E}[R_{k \wedge T_{\varepsilon}}|\mathcal{F}_{k-1}] & = \mathbb{E}[F_{k \wedge T_{\varepsilon}}|\mathcal{F}_{k-1}] + \mathbb{E}[\Theta \varepsilon \sum_{j=0}^{k \wedge T_{\varepsilon}} \Delta_j | \mathcal{F}_{k-1} ]\\
        & \leq F_{k-1} - \Theta \varepsilon \Delta_k + \mathbb{E}[\Theta \varepsilon \sum_{j=0}^{k \wedge T_{\varepsilon}} \Delta_j | \mathcal{F}_{k-1} ] \\
        & = F_{k-1} + \mathbb{E}[\Theta \varepsilon \sum_{j=0}^{(k-1) \wedge T_{\varepsilon}} \Delta_j | \mathcal{F}_{k-1} ] \\
        & = F_{k-1} + \Theta \varepsilon \sum_{j=0}^{(k-1) \wedge T_{\varepsilon}} \Delta_j \\
        & = R_{(k-1) \wedge T_{\varepsilon}},
    \end{aligned}
\end{equation}

\noindent where the summation in the last expectation in \eqref{Supermar. proof} is true by moving $\Theta \varepsilon \Delta_k$ inside the summation so that it has one less term if $k < T_{\varepsilon}$.

\noindent If $k < T_{\varepsilon}$, then 

\begin{equation*}
    |R_{k \wedge T_{\varepsilon}}| = |R_k| \leq F_{max} + \Theta \varepsilon k \Delta_{max}.
\end{equation*}

\noindent If $k \geq T_{\varepsilon}$, then 

\begin{equation*}
    |R_{k \wedge T_{\varepsilon}}| = |R_{\varepsilon}| \leq  F_{max} + \Theta \varepsilon T_{\varepsilon} \Delta_{max}.
\end{equation*}

\noindent This is also bounded almost surely since $T_{\varepsilon}$ is bounded almost surely. Hence, according to \eqref{R_k} and the optional stopping theorem \cite{GGD2020}, we have

\begin{equation} \label{optinal 1}
    \mathbb{E}[\Theta \varepsilon \sum_{j=0}^{ T_{\varepsilon}} \Delta_j] \leq \mathbb{E}[R_{T_{\varepsilon}}] + F_{max} \leq \mathbb{E}[R_0] + F_{max} \leq 2 F_{max} + \Theta \varepsilon \Delta_{max}.
\end{equation}

\noindent Furthermore, since the renewal $A_n$ happens when $\zeta \Delta_j \geq \varepsilon$ and $N(T_{\varepsilon})$ is a subset of $\{0,1,2,...,T_{\varepsilon}\}$, we have 

\begin{equation} \label{optinal 2}
    \Theta \varepsilon \Big(\sum_{j=0}^{ T_{\varepsilon}} \zeta \Delta_j\Big) \geq \Theta \varepsilon \Big( N(T_{\varepsilon}) \varepsilon \Big).
\end{equation}

\noindent Combining (\ref{optinal 1}) and (\ref{optinal 2}), we have \begin{displaymath}
    \mathbb{E}[N(T_{\varepsilon})] \leq \frac{2 \zeta F_{max} + \zeta \Theta \varepsilon \Delta_{max}}{\Theta\varepsilon^2} \leq \frac{2 \zeta F_{max} }{\Theta \varepsilon^2}+\frac{\zeta \Delta_{max}}{\varepsilon}.
    \end{displaymath}  
\end{proof}

\begin{theorem}\label{Convergence rate}
Under conditions enunciated in \S \ref{conditions}, we have
    \begin{equation} \label{convergence equation}
    \mathbb{E}[T_{\varepsilon}] \leq \frac{p}{2p-1}\Big(\frac{2 \zeta F_{max} }{\Theta \varepsilon^2}+2\Big).
\end{equation}
\end{theorem}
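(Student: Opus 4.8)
The plan is to derive the bound from the two lemmas via a Wald-type identity for the renewal structure $\{A_l\}$ built in this subsection. First I would record the elementary domination
\begin{equation*}
    T_\varepsilon \;\le\; A_{N(T_\varepsilon)+1} \;=\; \sum_{l=1}^{N(T_\varepsilon)+1}\tau_l,
\end{equation*}
which holds because, by definition of the counting process, $A_{N(T_\varepsilon)} \le T_\varepsilon < A_{N(T_\varepsilon)+1}$. Here the integer-valued random variable $N(T_\varepsilon)+1$ is a stopping time with respect to the stopped filtration $\mathcal{G}_l \overset{\Delta}{=} \mathcal{F}_{A_l}$, and Lemma \ref{N(T)} guarantees that $\mathbb{E}[N(T_\varepsilon)+1] \le \frac{2\zeta F_{max}}{\Theta\varepsilon^2}+2 < \infty$. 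I would also note that $T_\varepsilon < \infty$ almost surely: this follows from the Corollary, where $F_k$ (conditioned on $T_\varepsilon \ge k$) is a supermartingale with the strictly negative drift term $-\Theta\varepsilon\Delta_k$ coming from Theorem \ref{delta 1}, so the process cannot survive indefinitely below the stopping level $\|d_k\|<\varepsilon$.

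Next I would take expectations and apply Wald's identity in its conditional form, since the inter-arrival times $\tau_l$ are not i.i.d. but, by Lemma \ref{tau}, satisfy the uniform bound $\mathbb{E}[\tau_l \mid \mathcal{G}_{l-1}] \le \frac{p}{2p-1}$ for every $l \ge 1$. Because $\{N(T_\varepsilon)+1 \ge l\} = \{A_{l-1} \le T_\varepsilon\} \in \mathcal{G}_{l-1}$, the tower property can be applied term by term to the (almost surely finite) sum, giving
\begin{equation*}
    \mathbb{E}[T_\varepsilon] \;\le\; \mathbb{E}\Big[\sum_{l=1}^{N(T_\varepsilon)+1}\tau_l\Big] \;\le\; \frac{p}{2p-1}\,\mathbb{E}\big[N(T_\varepsilon)+1\big] \;\le\; \frac{p}{2p-1}\Big(\frac{2\zeta F_{max}}{\Theta\varepsilon^2}+2\Big),
\end{equation*}
where the last step substitutes the bound of Lemma \ref{N(T)} together with the identity $\tfrac{\zeta\Delta_{max}}{\varepsilon}=1$ that follows from the choice $\Delta_{max}=\varepsilon/\zeta$. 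This is exactly the asserted estimate \eqref{convergence equation}, and since the right-hand side is $O(1/\varepsilon^2)$ it yields the advertised convergence rate.

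The main obstacle I anticipate is the rigorous justification of the Wald-type step rather than any computation: one must be careful that the renewal epochs $A_l$ and the event $\{N(T_\varepsilon)+1 \ge l\}$ are measurable with respect to the information available at the previous epoch, that the a.s. finiteness of $T_\varepsilon$ (hence of $N(T_\varepsilon)$) legitimizes interchanging the now-finite random sum with the expectation, and that the conditional bound of Lemma \ref{tau} is indeed uniform over $l$ and over the two cases $\zeta\Delta_{A_{l-1}} > \varepsilon$ and $\zeta\Delta_{A_{l-1}} = \varepsilon$ analyzed there. Once these measurability and integrability points are settled, the remainder is just the two substitutions from Lemmas \ref{tau} and \ref{N(T)}.
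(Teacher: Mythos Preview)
Your proposal is correct and follows essentially the same route as the paper: bound $T_\varepsilon \le A_{N(T_\varepsilon)+1}$, note that $N(T_\varepsilon)+1$ is a stopping time, apply a Wald-type inequality, and substitute the bounds from Lemmas~\ref{tau} and~\ref{N(T)}. If anything, you are more careful than the paper, which simply invokes ``Wald's equation (inequality form)'' with $\mathbb{E}[\tau_1]$ as though the inter-arrival times were i.i.d., whereas you correctly flag that the $\tau_l$ are not i.i.d.\ and instead use the uniform conditional bound $\mathbb{E}[\tau_l\mid\mathcal{G}_{l-1}]\le p/(2p-1)$ together with the tower property.
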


\begin{proof}
First, note that $N(T_{\varepsilon})+1$ is a stopping time for the renewal process $\{ A_n: n \geq 0\}$. Thus, using Wald's equation (inequality form) \cite{GGD2020}, we have 
    
\begin{equation*}
        \mathbb{E}[A_{N(T_{\varepsilon})+1}] \leq \mathbb{E}[\tau_1] \mathbb{E}[N(T_{\varepsilon})+1].
\end{equation*}
    
\noindent Moreover, since $A_{N(T_{\varepsilon})+1} \geq T_{\varepsilon}$, we have

\begin{equation*}
    \mathbb{E}[T_{\varepsilon}] \leq \mathbb{E}[\tau_1] \mathbb{E}[N(T_{\varepsilon})+1].
\end{equation*}

\noindent Hence, by Lemma \ref{tau} and Lemma \ref{N(T)}
\begin{displaymath}
    \mathbb{E}[T_{\varepsilon}] \leq \frac{p}{2p-1}\Big(\frac{2 \zeta F_{max} }{\Theta \varepsilon^2}+2\Big). 
\end{displaymath}

\end{proof}
\subsection{Optimality Condition}
Theorem \ref{optimality} demonstrates that if $||d_k||<\varepsilon$, the $\varepsilon$-optimality condition for $f(x)$ will be satisfied. The error bound on this optimality is linked to both the sample size and the subgradient of $f_k(x)$. Readers seeking the in-depth proof of Theorem \ref{optimality} can refer to Appendix \ref{optimality condition proof}.

\begin{theorem} \label{optimality}
    Let $d^* = - \argmin_{g \in \partial f(\hat{\alpha}_k)} ||g|| $. If $k$ is the smallest index for which $||d_k|| < \varepsilon$ and $|S_k| \geq \frac{-2(\varepsilon')^2}{L^2\log \delta}$, then 
    \begin{equation*}
        P(||d^*|| < 4 \varepsilon + \varepsilon') \geq 1-\delta.
    \end{equation*}
\end{theorem}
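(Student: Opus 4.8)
The goal is to produce, with probability at least $1-\delta$, a single subgradient $\tilde g\in\partial f(\hat\alpha_k)$ with $\|\tilde g\|<4\varepsilon+\varepsilon'$; since $d^{*}=-\argmin_{g\in\partial f(\hat\alpha_k)}\|g\|$, we then have $\|d^{*}\|=\min_{g\in\partial f(\hat\alpha_k)}\|g\|\le\|\tilde g\|$, which is exactly the assertion. To avoid the bookkeeping caused by the growing dimension of the $\alpha$-iterates, I would carry the whole argument in the RKHS representation $\beta=\sum_i\alpha_i\phi(z_i)$: there the quadratic term contributes the dimension-free vector $\hat\beta_k$ to every subgradient, the embedding $\beta_k=(\alpha_k,\vec 0)$ is an honest identification, and the hinge part contributes a convex combination of the vectors $-w_i\phi(z_i)$, so the only genuine difference between $\partial f_{k-1}$, $\partial f_k$ and $\partial f$ at a common point is that one is a sample average (over $S_{k-1}$, resp.\ $S_k$) of bounded i.i.d.\ vectors and the other its expectation.

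The backbone of the proof is a chain of triangle inequalities relating $\tilde g$ to a subgradient of $f_{k-1}$ at $\hat\alpha_{k-1}$, then to a subgradient of $f_{k-1}$ at $\hat\alpha_k$, then to a subgradient of $f_k$ at $\hat\alpha_k$, and finally to a subgradient of $f$ at $\hat\alpha_k$, with four error terms. The first is deterministic and is where the hypothesis $\|d_k\|<\varepsilon$ enters: using $d_k=-Nr(G_k)$ with $G_k=\{-d_{k-1},g_k\}$, $g_k\in\partial f_{k-1}(\hat\alpha_{k-1})$, the optimality conditions of the one-dimensional QP \eqref{p^k}, the Wolfe exit conditions \eqref{L&R} satisfied by $t_k$ (in particular the curvature bound $0>\langle g(t_k),d_k\rangle\ge-m_2\|d_k\|^2$ from the set $R$), and the fact that $k$ is the \emph{first} index with $\|d_k\|<\varepsilon$ --- so $\|d_{k-1}\|\ge\varepsilon>\|d_k\|$, which rules out $\lambda_k^{*}=1$ and hence controls the relative weights in the convex combination --- one extracts a genuine subgradient of $f_{k-1}$ at $\hat\alpha_{k-1}$ of norm $O(\varepsilon)$. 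The second error term is handled by the step-size cap $t_k\|d_k\|\le\delta_{k-1}\le\delta_{max}$, which keeps $\hat\alpha_k$ inside an $O(\varepsilon)$-ball of $\hat\alpha_{k-1}$, together with closedness (upper semicontinuity) of the subdifferential map and $L_f$-Lipschitz continuity of $f_{k-1}$. The third term --- passing from $f_{k-1}$ to $f_k$ --- is again a sample-count mismatch between two averages of bounded hinge subgradients. The fourth term --- passing from $f_k$ to $f$ --- is the sampling error $\frac{1}{|S_k|}\sum_{i\le|S_k|}v_i-\mathbb E[v]$, each $v_i$ a selection from the hinge subdifferential and bounded thanks to $|\langle\phi(z),\beta\rangle|\le M$ in \S\ref{conditions}; Hoeffding's inequality bounds it by $\varepsilon'$ with probability at least $1-\delta$ exactly under the stated condition $|S_k|\ge-2(\varepsilon')^2/(L^2\log\delta)$. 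Collecting the deterministic contributions, totalling at most $4\varepsilon$, and the stochastic one as $\varepsilon'$ gives $P(\|d^{*}\|<4\varepsilon+\varepsilon')\ge1-\delta$.

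The step I expect to be the real obstacle is the first one --- certifying that a small $\|d_k\|$ actually implies near-stationarity of the \emph{current sampled} objective at the \emph{current} incumbent. Unlike Wolfe's deterministic method, where $-d_k$ is the minimum-norm element of the convex hull of subgradients taken at one and the same point for one and the same function, here $G_k$ blends the stale aggregate $-d_{k-1}$ with a fresh subgradient $g_k$ of a \emph{different} sampled function $f_{k-1}$ evaluated at a \emph{possibly different} iterate $\hat\alpha_{k-1}$, and a small $\|d_k\|$ does not by itself bound $\|g_k\|$. Making this rigorous will require combining (i) the minimality of $\lambda_k^{*}$ in \eqref{p^k} with $\|d_{k-1}\|\ge\varepsilon$; (ii) the strong Wolfe curvature condition from \eqref{L&R} at the accepted step; (iii) the step-size control $t_k\|d_k\|\le\delta_{k-1}$; and, most likely, (iv) an inductive argument over the block of iterations since the last incumbent change, so that the aggregate $-d_k$ can be recognized --- up to errors controlled by Definition \ref{kappa approximation} and Theorem \ref{L^0} --- as a near-minimum-norm convex combination of subgradients of $f$ at a single point, at which stage convexity of $\partial f(\hat\alpha_k)$ finishes the job. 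Everything downstream of this step (the iterate-move hop, the sample-count hop, and the Hoeffding hop) is routine, and the only remaining care is to track constants so the deterministic errors sum to no more than $4\varepsilon$.
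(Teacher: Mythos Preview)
Your outline differs from the paper's in both halves of the argument, and the step you correctly flag as the obstacle is exactly where the paper takes a shorter route that sidesteps it.

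For the deterministic half, you try to manufacture an explicit small-norm subgradient by unwinding $d_k$ inductively back through the stale aggregate $-d_{k-1}$, and you rightly worry that this requires recognising $-d_k$ as (approximately) a convex combination of subgradients of one function at one point. The paper (Lemma~\ref{d_k^*}) does none of this. It argues by contradiction: if the minimum-norm subgradient $d_k^*$ of the sampled function had $\|d_k^*\|\ge 4\varepsilon$, then $\|g_k\|\ge\|d_k^*\|\ge 4\varepsilon$; and since $k$ is the \emph{first} index with $\|d_k\|<\varepsilon$, also $\|d_{k-1}\|\ge\varepsilon$. Writing out
\[
\|d_k\|^2=(\lambda_k^*)^2\|g_k\|^2+(1-\lambda_k^*)^2\|d_{k-1}\|^2-2\lambda_k^*(1-\lambda_k^*)\langle g_k,d_{k-1}\rangle
\]
and using that the previous line search forces $\langle g_k,d_{k-1}\rangle\le 0$ (and $\ge -m_2\|d_{k-1}\|^2$) via the $R$-condition in \eqref{L&R}, a two-case analysis on whether $\|g_k\|>\|d_{k-1}\|$ or $\|d_{k-1}\|\ge\|g_k\|$ yields $\|d_k\|\ge\varepsilon$, a contradiction. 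The insight you are missing is that one never needs $\|g_k\|$ itself to be small: if both endpoints of the segment $[-d_{k-1},g_k]$ have large norm and their inner product has the right sign, the nearest point to the origin on that segment cannot be small. This dispenses with your proposed inductive unwinding, and your second and third hops (moving from $\hat\alpha_{k-1}$ to $\hat\alpha_k$ and from $f_{k-1}$ to $f_k$) do not appear in the paper's proof at all.

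For the stochastic half, you apply Hoeffding to vector-valued subgradient selections. The paper (Lemma~\ref{d and X} and Theorem~\ref{X and X_k}) instead passes through the scalar minimax identity $\|d^*\|=\mathcal{X}(\hat\alpha_k):=-\min_{\|d\|=1}f'(\hat\alpha_k;d)$ and $\|d_k^*\|=\mathcal{X}_k(\hat\alpha_k)$, then bounds $\mathcal{X}_\varepsilon(\hat\alpha_k)-\mathcal{X}_k(\hat\alpha_k)$ by freezing the single direction $\bar d_k(\hat\alpha_k)$, applying Hoeffding to the resulting one-dimensional sample average of directional derivatives, and using $\partial h\subseteq\partial_\varepsilon h$ to absorb the $\varepsilon$-subdifferential. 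This keeps the concentration scalar and avoids the measurable-selection issue implicit in your fourth hop.
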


\section{Preliminary Computational Results} \label{IT&ER}

Our computational experiments are based on data sets available at the UCI Machine Learning Repository~\cite{UCI2010}. Our study considers three different methods: Wolfe's algorithm \cite{W1975}, kernel Pegasos algorithm\footnote{The computations below are based on our implementation of the Pegasos Algorithm.} and the SCS algorithm. Wolfe's algorithm uses a deterministic objective function (i.e., the SAA) and is treated as a benchmark, while Pegasos and SCS use (\ref{Kernel-SVM}) as the objective function.
However, in order to compare the progress of the objective function  values for different algorithms, we will track the values of (\ref{Kernel-SVM}) for all three algorithms. The values of $\{f(\hat{\alpha}_k)\}_{k=1}^{k=50}$ and $\{f(\hat{\alpha}_k)\}_{k=-50}^{k=-1}$ are shown in Figures $\ref{obj}$ and $\ref{obj1}$, where the notation $k = 1, \ldots, 50$, and $k=-50, \ldots, -1$ represent the solution value in the first fifty, and last fifty (respectively) iterations of each algorithms. The algorithms were implemented on a MacBook Pro 2019 with a 2.6GHz 6-core Intel Core i7 processor and a 16GB of 2400MHz DDR4 onboard memory.

\hspace{2cm}

\begin{figure}[ht]
\footnotesize
     \centering
     \caption{$\{f(\hat{\alpha}_k)\}_{k=1}^{k=50}$ for different combinations (data,algorithm).}
     \begin{subfigure}[b]{0.28\textwidth}
         \centering         \includegraphics[width=\textwidth]{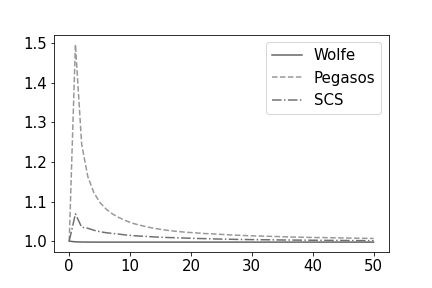}
         \caption{Breast Cancer}
     \end{subfigure}
     \hfill
     \begin{subfigure}[b]{0.28\textwidth}
         \centering
         \includegraphics[width=\textwidth]{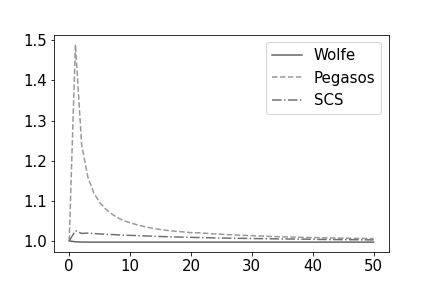}
         \caption{Heart Failure}
     \end{subfigure}
     \hfill
     \begin{subfigure}[b]{0.28\textwidth}
         \centering
         \includegraphics[width=\textwidth]{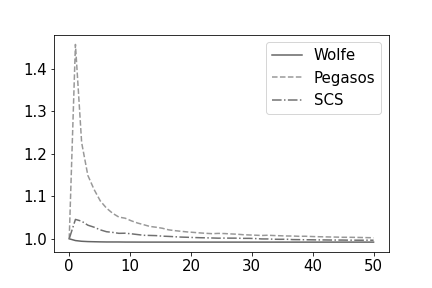}
         \caption{Wine Quality}
     \end{subfigure}\\
     \begin{subfigure}[b]{0.28\textwidth}
         \centering
         \includegraphics[width=\textwidth]{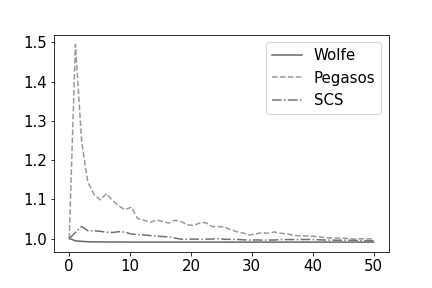}
         \caption{Avila Bible}
     \end{subfigure}
     \hfill
     \begin{subfigure}[b]{0.28\textwidth}
         \centering
         \includegraphics[width=\textwidth]{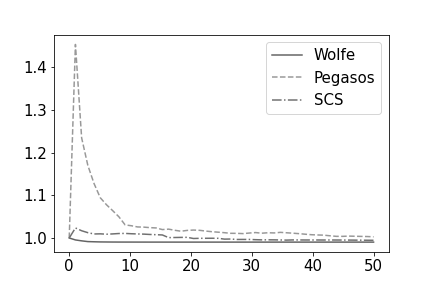}
         \caption{Magic Telescope}
     \end{subfigure}
     \hfill
     \begin{subfigure}[b]{0.28\textwidth}
         \centering
         \includegraphics[width=\textwidth]{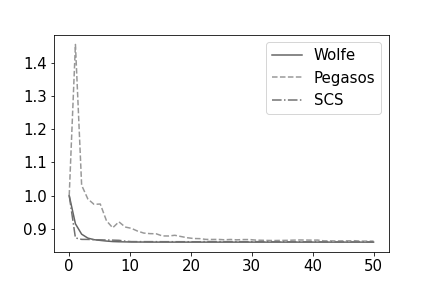}
         \caption{Room Occupancy}
     \end{subfigure}\\
        \label{obj}
\end{figure}

\begin{figure}[ht]
\footnotesize
     \centering
     \caption{$\{f(\hat{\alpha}_k)\}_{k=-50}^{k=-1}$ for different combinations (data,algorithm).}
     \begin{subfigure}[b]{0.28\textwidth}
         \centering
         \includegraphics[width=\textwidth]{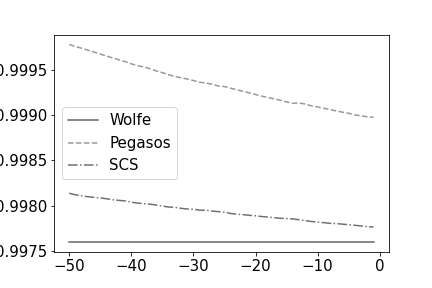}
         \caption{Breast Cancer}
     \end{subfigure}
     \hfill
     \begin{subfigure}[b]{0.28\textwidth}
         \centering
         \includegraphics[width=\textwidth]{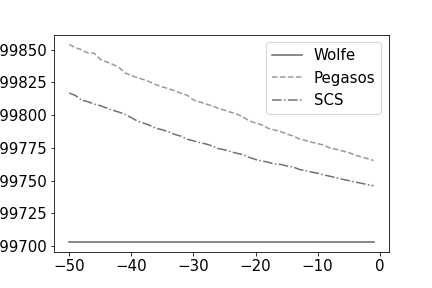}
         \caption{Heart Failure}
     \end{subfigure}
     \hfill
     \begin{subfigure}[b]{0.28\textwidth}
         \centering
         \includegraphics[width=\textwidth]{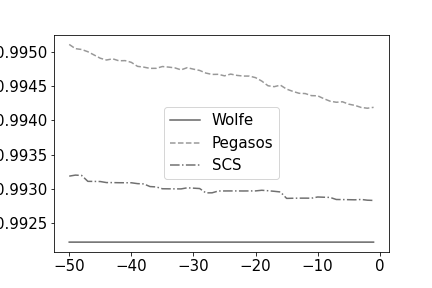}
         \caption{Wine Quality}
     \end{subfigure}\\
     \begin{subfigure}[b]{0.28\textwidth}
         \centering
         \includegraphics[width=\textwidth]{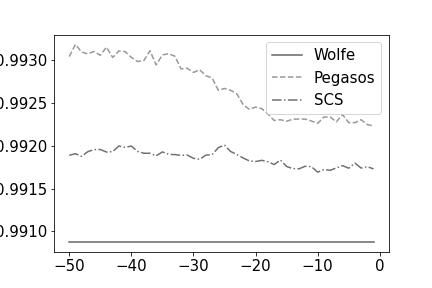}
         \caption{Avila Bible}
     \end{subfigure}
     \hfill
     \begin{subfigure}[b]{0.28\textwidth}
         \centering
         \includegraphics[width=\textwidth]{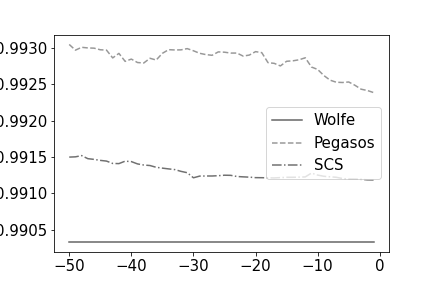}
         \caption{Magic Telescope}
     \end{subfigure}
     \hfill
     \begin{subfigure}[b]{0.28\textwidth}
         \centering
         \includegraphics[width=\textwidth]{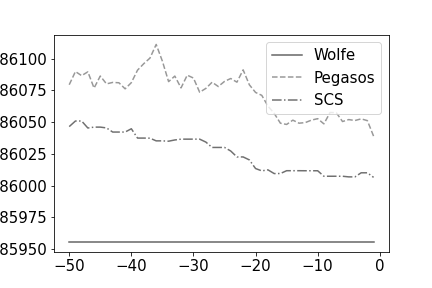}
         \caption{Room Occupancy}
     \end{subfigure}\\
        \label{obj1}
\end{figure}

\noindent \textbf{Remark 3}: (i) The objective function values validated for  the SCS algorithm are lower than those for the Pegasos algorithm. (ii) The SCS algorithm also converges faster than the Pegasos Algorithm. (iii) The beauty of the Pegasos algorithm is that it provides a decision rule without providing a solution to \eqref{Kernel-SVM}. This decision rule provides a classifier without any reference to optimality. On the other hand, the SCS algorithm is truly an optimization algorithm  with a stopping rule in which the steps (and stopping) are guided by $d_k$ ($||d_k||<\varepsilon$).

The performance of all three algorithms for different data sets are given in Table \ref{classification1}. To make the experiment more persuasive, we divide the data into training and testing parts. We obtain a classifier using the training data and apply such classifiers in the testing data set to estimate the accuracy. Here, accuracy reflects the fraction of total number of data points that have been correctly classified.  
Since Wolfe's algorithm requires the entire data set, instances with very large data sets are beyond its reach and are reported by N/A in Table \ref{classification1}. The accuracy and training time reported in the Table \ref{classification1} are based on the average of 20 independent runs for each pair (data, algorithm) using different random seeds.  

\renewcommand{\arraystretch}{1.3}


\begin{table}[h]
\caption{Classification performance of algorithms for different data sets}
\scriptsize
\begin{tabular}{|C{1cm}|cC{1cm}|c|C{0.6cm}c|c|cc|c|cc|}
\hline
                & Algorithm &  & &\multicolumn{2}{c|}{Wolfe}              &  & \multicolumn{2}{c|}{Pegasos}             &  & \multicolumn{2}{c|}{SCS}                \\ \hline
data sets       & \multicolumn{1}{c|}{samples} & features    &  & \multicolumn{1}{c|}{accu.} & time(s) &  & \multicolumn{1}{c|}{accu.} & time(s)  &  & \multicolumn{1}{c|}{accu.} & time(s) \\ \hline
heart failure    & \multicolumn{1}{c|}{273} & 13        &  & \multicolumn{1}{c|}{0.833}    & 1.48   &  & \multicolumn{1}{c|}{0.833}    & 0.75     &  & \multicolumn{1}{c|}{0.833}    & 6.12   \\ \hline
breast cancer   & \multicolumn{1}{c|}{500} & 30        &  & \multicolumn{1}{c|}{0.97}     & 2.28   &  & \multicolumn{1}{c|}{0.95}     & 1.14    &  & \multicolumn{1}{c|}{0.97}     & 4.92   \\ \hline
wine quality    & \multicolumn{1}{c|}{680} & 11        &  & \multicolumn{1}{c|}{0.87}     & 4.19   &  & \multicolumn{1}{c|}{0.855}    & 3.93    &  & \multicolumn{1}{c|}{0.86}     & 12.49  \\ \hline
avila Bible     & \multicolumn{1}{c|}{2,000} & 10      &  & \multicolumn{1}{c|}{0.726}    & 32.97  &  & \multicolumn{1}{c|}{0.714}    & 5.60    &  & \multicolumn{1}{c|}{0.718}    & 38.26  \\ \hline
magic telescope & \multicolumn{1}{c|}{5,000} & 10      &  & \multicolumn{1}{c|}{N/A}      & N/A     &  & \multicolumn{1}{c|}{0.735}    & 25.99   &  & \multicolumn{1}{c|}{0.74}     & 33.25  \\ \hline
room occupancy  & \multicolumn{1}{c|}{7,500} & 18      &  & \multicolumn{1}{c|}{N/A}      & N/A     &  & \multicolumn{1}{c|}{0.974}    & 56.54   &  & \multicolumn{1}{c|}{0.976}    & 39.88  \\ \hline
swarm behavious & \multicolumn{1}{c|}{20,000} & 2400     &  & \multicolumn{1}{c|}{N/A}      & N/A     &  & \multicolumn{1}{c|}{0.805}    & 67.61   &  & \multicolumn{1}{c|}{0.9}      & 13.38   \\ \hline
mini-BooNE      & \multicolumn{1}{c|}{120,000} & 50    &  & \multicolumn{1}{c|}{N/A}      & N/A     &  & \multicolumn{1}{c|}{0.79}     & 145.86  &  & \multicolumn{1}{c|}{0.81}     & 39.57  \\ \hline
skin-nonskin    & \multicolumn{1}{c|}{200,000} & 3    &  & \multicolumn{1}{c|}{N/A}      & N/A     &  & \multicolumn{1}{c|}{0.93}     & 176.45  &  & \multicolumn{1}{c|}{0.97}     & 19.68  \\ \hline
Hepmass         & \multicolumn{1}{c|}{3,500,000} & 28  &  & \multicolumn{1}{c|}{N/A}      & N/A     &  & \multicolumn{1}{c|}{0.83}     & 3762.89 &  & \multicolumn{1}{c|}{0.87}     & 42.69  \\ \hline
\end{tabular}
\label{classification1}%
\end{table}

\noindent \textbf{Remark 4}: Conventional wisdom in ML suggests that SFO methods provide the best balance between scalability and accuracy of stochastic optimization problems. However from Table \ref{classification1}, we can see that this general impression may not only be questionable but also encourage an over-emphasis on SFO methods. In fact, the SCS algorithm performs better than the Pegasos algorithm both in accuracy and in computational speed, especially for the larger data sets in our experiment. Specifically, for the four largest data sets in our experiment, the computational times and accuracy reported for the SCS algorithm are superior to those obtained using Pegasos. Thus, the SCS algorithm gives an alternative approach which is quite competitive for Kernel SVM problems.    

\section{Conclusion}\label{Con}
Our approach leads to a class of online algorithms that go beyond first-order approximations and incorporates stochastic  analysis of stopping times in a direction-finding method. Specifically, it includes several features of Wolfe's method (e.g., stability and good convergence rate) as well as online aspects of Pegasos. This combination promotes improvements on both deterministic and stochastic methods that have been previously used for SVM. In contrast to the Pegasos algorithm, we find that the optimization performance of the SCS algorithm is more reliable and provides consistently lower objective values (see Figures \ref{obj} and \ref{obj1}) in out-of-sample validation. It appears that such reliability may be difficult to achieve using first-order methods without additional fine-tuning. Although our experimental results are based on some fixed data sets, the SCS approach can be effective for classification of streaming data as well. Finally, we should emphasize that although the SCS algorithm in this paper is specifically designed for kernel SVM, it can be generalized to other non-smooth stochastic convex optimization problems, as well as constraimed problems using primal-dual methods similar to \cite{Sen1986ACO, R1991}. We also conjecture that many of the asymptotic properties (e.g., asymptotic normality) of SGD methods may also be studied for the SCS algorithm \cite {davis2023}.  \\

\noindent \textbf{Acknowledgment}. This paper is supported, in part, by a grant from AFOSR FA9550-20-1-0006.  The final phase of this paper was funded via ONR grant N00014-20-1-2077.

\bibliographystyle{siamplain}
\bibliography{main}

\appendix

\section{Proof of optimality condition} \label{optimality condition proof}
In this section, we want to prove that $||d_k|| < \varepsilon$ implies that the $\varepsilon$-optimality condition of $f(\hat{\alpha}_k)$, as stated in Theorem \ref{optimality}. The error bound on optimality is related to the sample size as well as the the subgradient of $f_k(\hat{\alpha}_k)$. First of all, we will define a few well-known notations.

\begin{itemize}
    \item $d^* = - \argmin_{g \in \partial f(\hat{\alpha}_k)} ||g|| $: negative of the smallest norm subgradient of $f(\hat{\alpha}_k)$.
    \item $d_k^* = - \argmin_{g \in \partial f_k(\hat{\alpha}_k)} ||g|| $: negative of the smallest norm subgradient of $f_k(\hat{\alpha}_k)$.
    \item $d_k = -Nr(\{g_k,-d_{k-1} \})$, where $g_k \in \partial f_k(\hat{\alpha}_k)$.
    \item Subdifferential of $f$ at $z'$: $\partial f(z') = \{q: f(z) \geq f(z') + q^T (z-z')\}$.
    \item $\varepsilon$-subdifferential of $f$ at $z'$: $\partial f_{\varepsilon}(z') = \{q: f(z) \geq f(z') + q^T (z-z')-\varepsilon\}$.
    \item Directional derivative of $f$ at $z$ in direction $d$: $f'(z;d)=\max\{d^Tq: q \in \partial f(z)\}$.
    \item $\varepsilon$-directional derivative of $f$ at $z$ in direction $d$: $f_{\varepsilon}'(z;d)=\max\{d^Tq: q \in \partial_{\varepsilon} f(z)\}$.
    \item $\mathcal{X}(z) := -\min_{||d|| = 1} f'(z;d)$ and $\mathcal{X}_{k}(z) := -\min_{||d|| = 1} f_k'(z;d)$.
    \item $\mathcal{X}_{\varepsilon}(z) := -\min_{||d|| = 1} f_{\varepsilon}'(z;d)$ and $\bar{d}_k(z) = - \argmin_{||d|| = 1} f_{\varepsilon}'(z;d) $.
\end{itemize}

\begin{lemma} \label{d_k^*}
    Let the tuple $(m_1, m_2)$ satisfy the requirement in Algorithm \ref{LSA} ($\frac{1}{4} \leq m_1 < \frac{1}{2}$ and $\frac{1}{4} \leq m_2 < m_1$). If $k$ is the smallest index for which $||d_k|| < \varepsilon$, then we have $||d_k^*|| < 4 \varepsilon$.
\end{lemma}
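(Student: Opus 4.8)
\textbf{Proof proposal for Lemma~\ref{d_k^*}.}

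The plan is to relate the three relevant directions at $\hat{\alpha}_k$: the true smallest-norm subgradient $-d_k^*$, the algorithm's iterate $d_k = -Nr(\{g_k,-d_{k-1}\})$ with $g_k \in \partial f_k(\hat{\alpha}_k)$, and the previous direction $d_{k-1}$. First I would exploit the minimality of $k$: since $k$ is the \emph{smallest} index with $||d_k|| < \varepsilon$, we must have $||d_{k-1}|| \geq \varepsilon$, so at iteration $k-1$ the algorithm did not terminate and a genuine line-search step was taken. The key structural fact is that $d_k$ solves the one-dimensional projection problem \eqref{p^k}, so $d_k = -[\lambda_k^\ast(-d_{k-1}) + (1-\lambda_k^\ast)g_k]$ for some $\lambda_k^\ast \in [0,1]$, and by the variational characterization of the projection of $0$ onto the segment, $\langle d_k, -d_{k-1} \rangle \geq ||d_k||^2$ and $\langle d_k, g_k \rangle \geq ||d_k||^2$ (each endpoint of the segment makes a non-acute angle with the displacement from the projection to $0$). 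I would combine these with $||d_k|| < \varepsilon$.

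Next I would use the Wolfe line-search guarantee at iteration $k-1$. By construction the step size $t_{k-1}$ returned by Algorithm~\ref{LSA} lies in $R$, i.e. $0 > \langle g(t_{k-1}), d_{k-1}\rangle \geq -m_2 ||d_{k-1}||^2$ where $g(t_{k-1}) \in \partial f_{k-1}(\hat{\alpha}_{k-1} + t_{k-1} d_{k-1})$. This is precisely the mechanism in Wolfe's analysis that forces the new subgradient $g_k$ (taken at the new incumbent) to have a controlled inner product with $-d_{k-1}$: the curvature/R-condition means $-d_{k-1}$ cannot be too strong a descent direction for $f_{k-1}$ at the new point, which bounds $\langle g_k, d_{k-1}\rangle$ from below. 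Using $m_2 < m_1 < \tfrac12$, one gets a bound of the form $\langle g_k, -d_{k-1}\rangle \leq m_2 \|d_{k-1}\|^2$ combined with $\langle d_k,-d_{k-1}\rangle \ge \|d_k\|^2$; since $d_k$ is a convex combination of $g_k$ and $-d_{k-1}$, I would solve for $||g_k||$ and $||d_{k-1}||$ in terms of $||d_k|| < \varepsilon$. The arithmetic (the constants $\tfrac14 \le m_2 < m_1 < \tfrac12$) is exactly what produces the factor $4$: one shows $||g_k|| < 4\varepsilon$, and then since $d_k^\ast$ is the minimum-norm element of $\partial f_k(\hat{\alpha}_k)$ while $g_k \in \partial f_k(\hat{\alpha}_k)$, we get $||d_k^\ast|| \leq ||g_k|| < 4\varepsilon$.

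The main obstacle I anticipate is making the second step rigorous: translating the line-search $R$-condition at $\hat{\alpha}_{k-1} + t_{k-1}d_{k-1}$ into a usable lower bound on $\langle g_k, -d_{k-1}\rangle$, because $g_k$ is a subgradient at the \emph{new} incumbent $\hat{\alpha}_k$ (which, after a successful step, equals $\hat{\alpha}_{k-1}+t_{k-1}d_{k-1}$, but after an unsuccessful step equals $\hat{\alpha}_{k-1}$), and because the subgradients $g(t_{k-1})$ and $g_k$ need not coincide — one must argue via the definition of $d_k$ as a norm-minimizing convex combination that whichever subgradient is selected, the projection $d_k$ still inherits the angle bound. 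I would handle this by noting that the termination test and the update rule guarantee $d_k$ is formed from $g_k$ and $-d_{k-1}$ regardless, and that the inequality $||d_k|| < \varepsilon$ together with $||d_{k-1}|| \ge \varepsilon$ already pins down $\lambda_k^\ast$ away from $0$, forcing $g_k$ to nearly cancel $-d_{k-1}$; a short case analysis on whether iteration $k-1$ was successful closes the argument. This is essentially the stochastic-sampling analogue of Lemma~1 in Wolfe~\cite{W1975}, so I would mirror that proof's geometry while substituting $f_{k-1}$ for the fixed objective.
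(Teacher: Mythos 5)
Your proposal follows essentially the same route as the paper's proof, just stated in contrapositive form: the paper assumes $\|d_k^*\|\ge 4\varepsilon$, notes $\|g_k\|\ge\|d_k^*\|$, and then uses exactly your ingredients — the line-search $R$-condition $0>\langle g_k,d_{k-1}\rangle\ge -m_2\|d_{k-1}\|^2$, the minimality of $k$ (so $\|d_{k-1}\|\ge\varepsilon$), and the explicit optimal weight $\lambda_k^*$ in the minimum-norm convex combination — splitting into the cases $\|g_k\|>\|d_{k-1}\|$ and $\|g_k\|\le\|d_{k-1}\|$ to force $\|d_k\|\ge\varepsilon$, a contradiction; so the "arithmetic producing the factor $4$" you defer is precisely this two-case computation and does go through. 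One small correction: the variational characterization of the projection gives $\langle d_k,g_k\rangle\le-\|d_k\|^2$ and $\langle d_k,-d_{k-1}\rangle\le-\|d_k\|^2$ (your stated inequalities have flipped signs), but these are not actually needed — the paper only uses $\langle g_k,d_{k-1}\rangle<0$ to discard the cross term in $\|d_k\|^2$.
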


\begin{proof}
    Suppose the claim is false, then by the definition of $d_k^*$, we have $||g_k|| \geq ||d_k^*|| \geq 4\varepsilon$. Thus,
    \begin{equation} \label{norm of d_k}
        \begin{aligned}
            ||d_k||^2 & = ||\lambda_k^* g_k-(1-\lambda_k^*)d_{k-1}||^2\\
            & = (\lambda_k^*)^2||g_k||^2+(1-\lambda_k^*)^2||d_{k-1}||^2-2\lambda_k^*(1-\lambda_k^*)\langle g_k,d_{k-1} \rangle \\
            & \geq (\lambda_k^*)^2||g_k||^2+(1-\lambda_k^*)^2||d_{k-1}||^2 \\
        \end{aligned}
    \end{equation}

\noindent where the inequality holds because Algorithm \ref{LSA} ensures the condition R in \eqref{L&R}. 

\noindent We will now divide the analysis into 2 cases which are examined below: a) $||g_k|| > ||d_{k-1}|| \geq \varepsilon$ and $||g_k|| \geq 4 \varepsilon$  and b) $||d_{k-1}|| \geq ||g_k|| \geq 4 \varepsilon$. 

(a) In this case from \eqref{norm of d_k} we have 
\begin{equation} \label{norm of d_k case 1}
    ||d_k||^2 \geq \big(17 (\lambda_k^*)^2 -2 \lambda_k^* + 1 \big) \varepsilon^2
\end{equation}

\noindent Note that
\begin{equation*}
    \lambda_k^*=\frac{\langle g_k,d_{k-1} \rangle + ||g_k||^2}{||d_{k-1}||^2+||g_k||^2+2\langle g_k,d_{k-1} \rangle} \geq \frac{-m_2||d_{k-1}||^2+||g_k||^2}{||d_{k-1}||^2+||g_k||^2},
\end{equation*}

\noindent where the inequality holds because the R condition in \eqref{L&R} ensures $0 > \langle g_k,d_{k-1} \rangle \geq - m_2 ||d_{k-1}||^2$. Thus, we claim that $\lambda_k^* \geq \frac{2}{17}$ because it is suffice to show

\begin{equation*}
    - 17 \cdot m_2||d_{k-1}||^2+17 \cdot ||g_k||^2 \geq 2 ||d_{k-1}||^2 + 2 ||g_k||^2.
\end{equation*}

\noindent This can be verified by observing $m_2 < m_1 < \frac{1}{2}$ and $||g_k|| > ||d_{k-1}||$. Thus, based on \eqref{norm of d_k case 1} and $\lambda_k^* \geq \frac{2}{17}$, we have
$||d_k|| \geq \varepsilon $. This contradicts the assumptions of the lemma. 

\noindent (b) If $||d_{k-1}|| \geq ||g_k|| \geq 4 \varepsilon$, then we have 
\begin{equation*}
    ||d_k||^2 \geq \big(32 (\lambda_k^*)^2 -32 \lambda_k^* + 16 \big) \varepsilon^2 \geq 8 \varepsilon^2. 
\end{equation*}

\noindent Thus, $||d_k|| \geq 2 \sqrt{2} \varepsilon $, which also contradicts the assumptions of the lemma.
\end{proof}

\begin{lemma} \label{d and X}
    $||d^*|| = \mathcal{X}(\hat{\alpha}_k)$ and $||d_k^*|| = \mathcal{X}_k(\hat{\alpha}_k)$.
\end{lemma}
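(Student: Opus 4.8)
# Proof Proposal for Lemma \ref{d and X}

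The plan is to recognize that this is essentially the classical duality between the "steepest descent direction" characterization and the "minimum-norm subgradient" characterization of the directional derivative for a convex function, specialized to the point $\hat{\alpha}_k$ and the functions $f$ and $f_k$ respectively. Since both claims have identical structure, I would prove only the first equality $\|d^*\| = \mathcal{X}(\hat{\alpha}_k)$ in detail and then remark that the argument for $\|d_k^*\| = \mathcal{X}_k(\hat{\alpha}_k)$ is verbatim the same with $f$ replaced by $f_k$.

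The key steps, in order, are as follows. First I would unpack the definitions: $d^* = -\argmin_{g \in \partial f(\hat{\alpha}_k)} \|g\|$, so $-d^*$ is the minimum-norm element of the subdifferential; and $\mathcal{X}(\hat{\alpha}_k) = -\min_{\|d\|=1} f'(\hat{\alpha}_k; d)$, where $f'(\hat{\alpha}_k; d) = \max_{q \in \partial f(\hat{\alpha}_k)} \langle d, q\rangle$ is the support function of the (nonempty, compact, convex) set $\partial f(\hat{\alpha}_k)$. Second, I would combine these to write
\begin{equation*}
\mathcal{X}(\hat{\alpha}_k) = -\min_{\|d\|=1} \ \max_{q \in \partial f(\hat{\alpha}_k)} \langle d, q\rangle = \max_{\|d\|=1} \ \min_{q \in \partial f(\hat{\alpha}_k)} \langle -d, q\rangle = -\min_{\|d\|=1}\min_{q\in\partial f(\hat{\alpha}_k)}\langle d,q\rangle,
\end{equation*}
and then invoke Sion's minimax theorem (the unit sphere is compact, $\partial f(\hat{\alpha}_k)$ is compact convex, and the bilinear form is continuous and concave-convex) to swap the min and max. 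Third, I would evaluate the inner minimization: for fixed $q$, $\min_{\|d\|=1}\langle d, q\rangle = -\|q\|$, attained at $d = -q/\|q\|$ (when $q \neq 0$; the case $0 \in \partial f(\hat{\alpha}_k)$ is handled trivially since then both sides are zero). Substituting gives $\mathcal{X}(\hat{\alpha}_k) = -(-\min_{q \in \partial f(\hat{\alpha}_k)}\|q\|) = \min_{q\in\partial f(\hat{\alpha}_k)}\|q\| = \|d^*\|$, which is the claim.

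The main obstacle — really the only nontrivial point — is justifying the exchange of min and max, i.e., verifying the hypotheses of a minimax theorem. This requires that $\partial f(\hat{\alpha}_k)$ be nonempty (guaranteed by the standing assumption in \S\ref{conditions} that subdifferentials are nonempty everywhere), convex (always true for a subdifferential of a convex function), and compact — boundedness follows from the Lipschitz assumption on $f$ (with constant $L_f$), and closedness is standard. Once the minimax exchange is in hand, everything else is an elementary Cauchy–Schwarz computation. I would therefore structure the proof as: (1) state the support-function identity for the directional derivative, (2) apply the minimax theorem with a one-line verification of compactness/convexity, (3) compute the inner min via Cauchy–Schwarz, (4) conclude, and (5) note the $f_k$ case is identical.
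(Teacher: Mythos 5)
Your proposal is correct and is essentially the paper's own argument: both write $\mathcal{X}(\hat{\alpha}_k)=-\min_{\|d\|=1}\max_{q\in\partial f(\hat{\alpha}_k)}\langle d,q\rangle$ and identify this value with the norm of the minimum-norm subgradient, the paper by exhibiting the saddle point $\bar q=\argmin_{q\in\partial f(\hat{\alpha}_k)}\|q\|$, $\bar d=-\bar q/\|\bar q\|$, and you by an explicit minimax exchange followed by the Cauchy--Schwarz evaluation $\min_{\|d\|=1}\langle d,q\rangle=-\|q\|$. Two small repairs: the last expression in your displayed chain should be $-\max_{q\in\partial f(\hat{\alpha}_k)}\min_{\|d\|=1}\langle d,q\rangle$ rather than $-\min_{\|d\|=1}\min_{q\in\partial f(\hat{\alpha}_k)}\langle d,q\rangle$ (your subsequent substitution already uses the correct form), and Sion's theorem requires convex domains while the unit sphere is not convex, so carry out the exchange over the unit ball and pass to the sphere by positive homogeneity (or directly via the projection inequality $\langle\bar q,\,q-\bar q\rangle\ge 0$ for all $q\in\partial f(\hat{\alpha}_k)$, which is exactly the content of the paper's saddle-point claim).
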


\begin{proof}
    According to the definition of $\mathcal{X}(\hat{\alpha}_k)$,
    \begin{equation} \label{min-max}
        \mathcal{X}(\hat{\alpha}_k)=-\min_{||d|| = 1} f'(\hat{\alpha}_k;d) = -\min_{||d|| = 1} \max_{q \in \partial f(\hat{\alpha}_k)}d^Tq
    \end{equation}

\noindent Note that the min-max  problem \eqref{min-max} has a stationary point

\begin{equation*}
    \bar{q}=\argmin_{q \in \partial f(\hat{\alpha}_k)} ||q||, \quad  \bar{d} = - \frac{\bar{q}}{||\bar{q}||}.
\end{equation*}

\noindent Thus, $\mathcal{X}(\hat{\alpha}_k) = ||\bar{q}|| = ||d^*||$. Similarly, we can also prove that $ \mathcal{X}_k(\hat{\alpha}_k) = ||d_k^*||$.
\end{proof}

\begin{theorem} \label{X and X_k}
    If $|S_k| \geq \frac{-2(\varepsilon')^2}{L^2\log \delta}$, then $P(\mathcal{X}_{\varepsilon}(\hat{\alpha}_k)-\mathcal{X}_k(\hat{\alpha}_k) \leq \varepsilon') \geq 1-\delta$.
\end{theorem}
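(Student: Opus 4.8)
The plan is to recast both $\mathcal{X}_\varepsilon(\hat{\alpha}_k)$ and $\mathcal{X}_k(\hat{\alpha}_k)$ as suprema of difference quotients along rays issuing from $\hat{\alpha}_k$, so that the desired estimate reduces to comparing $f$ and $f_k$ at just \emph{two} points and a Hoeffding bound finishes the job. For the convex function $f_k$ the one-sided directional derivative equals the infimum of its (monotone) difference quotients, so
\[
  \mathcal{X}_k(\hat{\alpha}_k)=-\min_{\|d\|=1}f_k'(\hat{\alpha}_k;d)=\sup_{\|d\|=1,\;t>0}\frac{f_k(\hat{\alpha}_k)-f_k(\hat{\alpha}_k+td)}{t},
\]
and by the classical identity $f_\varepsilon'(z;d)=\inf_{t>0}\bigl(f(z+td)-f(z)+\varepsilon\bigr)/t$ for the $\varepsilon$-directional derivative,
\[
  \mathcal{X}_\varepsilon(\hat{\alpha}_k)=\sup_{\|d\|=1,\;t>0}\frac{f(\hat{\alpha}_k)-f(\hat{\alpha}_k+td)-\varepsilon}{t}.
\]
We may assume $\mathcal{X}_\varepsilon(\hat{\alpha}_k)>0$, for otherwise the claim is immediate since $\mathcal{X}_k(\hat{\alpha}_k)=\|d_k^*\|\ge 0$ by Lemma~\ref{d and X}. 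Because the RBF kernel matrix $Q^k$ is positive definite, $f$ is coercive, and since $d\mapsto f_\varepsilon'(\hat{\alpha}_k;d)$ is continuous (a finite support function), the supremum defining $\mathcal{X}_\varepsilon(\hat{\alpha}_k)$ is attained at a finite pair $(d^*,t^*)$ with $t^*>0$. The essential point is that $(d^*,t^*)$ is determined by $f$ and $\hat{\alpha}_k$ alone, not by the sample $S_k$ that defines $f_k$.

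Evaluating the supremum that defines $\mathcal{X}_k(\hat{\alpha}_k)$ at this same pair gives $\mathcal{X}_k(\hat{\alpha}_k)\ge\bigl(f_k(\hat{\alpha}_k)-f_k(\hat{\alpha}_k+t^*d^*)\bigr)/t^*$, and subtracting from the expression for $\mathcal{X}_\varepsilon(\hat{\alpha}_k)$ yields
\[
  \mathcal{X}_\varepsilon(\hat{\alpha}_k)-\mathcal{X}_k(\hat{\alpha}_k)\le\frac{\bigl(f(\hat{\alpha}_k)-f_k(\hat{\alpha}_k)\bigr)+\bigl(f_k(\hat{\alpha}_k+t^*d^*)-f(\hat{\alpha}_k+t^*d^*)\bigr)-\varepsilon}{t^*}.
\]
So everything reduces to controlling $|f-f_k|$ at the two \emph{fixed} points $y_1=\hat{\alpha}_k$ and $y_2=\hat{\alpha}_k+t^*d^*$; in particular no uniform control of $f-f_k$ over a ball is needed, which matters because the dimension $|S_k|$ of the decision vector grows with $k$. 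Since $f$ and $f_k$ share the quadratic term, $f(\alpha)-f_k(\alpha)$ is exactly the gap between the population hinge risk and the empirical hinge risk over $S_k$ at $\beta(\alpha)$; by the standing assumption $|\langle\phi(z),\beta\rangle|\le M$ each hinge summand ranges over an interval of length $M+1$ (this constant being the $L$ of the statement). Hence, conditioning on $\mathcal{F}_{k-1}$ and arguing exactly as in the proof of Theorem~\ref{L^0}, Hoeffding's inequality gives $\mathbb{P}\bigl(|f(y_j)-f_k(y_j)|>\varepsilon'/2\bigr)\le 2\exp\bigl(-|S_k|(\varepsilon')^2/(8(M+1)^2)\bigr)$ for $j=1,2$, so under the stated lower bound on $|S_k|$ a union bound makes the event $\mathcal{E}=\{|f(y_j)-f_k(y_j)|\le\varepsilon'/2,\ j=1,2\}$ have probability at least $1-\delta$. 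On $\mathcal{E}$ the numerator above is at most $\varepsilon'/2+\varepsilon'/2-\varepsilon=\varepsilon'-\varepsilon\le 0$ (the estimation slack $\varepsilon'$ being no larger than the algorithmic tolerance $\varepsilon$), hence $\mathcal{X}_\varepsilon(\hat{\alpha}_k)-\mathcal{X}_k(\hat{\alpha}_k)\le 0\le\varepsilon'$, which is the claim; in fact $\mathcal{X}_\varepsilon(\hat{\alpha}_k)\le\mathcal{X}_k(\hat{\alpha}_k)$ holds on $\mathcal{E}$, and $\varepsilon'$ enters only through the sample size needed to force $\mathbb{P}(\mathcal{E})\ge 1-\delta$.

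The step that needs genuine care is the reformulation of $\mathcal{X}_\varepsilon$: invoking the $\varepsilon$-directional-derivative identity and, above all, observing that the maximizing pair $(d^*,t^*)$ is nonrandom given $\hat{\alpha}_k$ -- this is exactly what licenses applying Hoeffding at $y_2$, since we are then not concentrating at a point chosen with the help of the sample. It is also why the theorem is phrased with $\mathcal{X}_\varepsilon$ rather than $\mathcal{X}$: the additive $\varepsilon$ forces $t^*>0$, so $y_2$ is genuinely distinct from $y_1$, whereas for $\mathcal{X}$ the supremum is only approached as $t\downarrow 0$, $y_2$ collapses onto $y_1$, and the two-point device degenerates. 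Everything downstream is a single line of algebra together with the same concentration estimate already used for Theorem~\ref{L^0}.
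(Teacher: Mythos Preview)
Your argument inserts an assumption that is not part of the statement and in fact fails in the only place the theorem is used. After applying Hoeffding at the two points $y_1,y_2$ you obtain
\[
\mathcal{X}_\varepsilon(\hat{\alpha}_k)-\mathcal{X}_k(\hat{\alpha}_k)\le\frac{\varepsilon'-\varepsilon}{t^*},
\]
and then declare the numerator nonpositive by asserting $\varepsilon'\le\varepsilon$. The theorem carries no such hypothesis; moreover, in the proof of Theorem~\ref{optimality} the present result is invoked in the limit $\varepsilon\to 0$ with $\varepsilon'$ held fixed, so $\varepsilon'>\varepsilon$ there. Without the extra assumption your bound is $(\varepsilon'-\varepsilon)/t^*$, and nothing in the hypotheses controls $t^*$ from below: it depends on the unknown $f$ and on $\hat{\alpha}_k$, so there is no reason for the right-hand side to be at most $\varepsilon'$.

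The paper avoids any $1/t^*$ factor by working at the directional-derivative level rather than through difference quotients. It fixes only the optimal \emph{direction} $\bar d_k$ for $\mathcal{X}_\varepsilon$ and bounds $\mathcal{X}_\varepsilon-\mathcal{X}_k\le f_k'(\hat{\alpha}_k;\bar d_k)-f_\varepsilon'(\hat{\alpha}_k;\bar d_k)$. The key device is the samplewise containment $\partial h(\hat{\alpha}_k,\omega)\subseteq\partial_\varepsilon h(\hat{\alpha}_k,\omega)$, which gives $h'(\hat{\alpha}_k;\bar d_k,\omega_i)\le h_\varepsilon'(\hat{\alpha}_k;\bar d_k,\omega_i)$ for every $i$ and thereby absorbs the ``$\varepsilon$ versus $0$'' discrepancy at zero cost. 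What remains is a single empirical-minus-expectation gap for the bounded random variables $h_\varepsilon'(\hat{\alpha}_k;\bar d_k,\omega_i)$, to which Hoeffding applies directly with deviation $\varepsilon'$, independently of $\varepsilon$. Your function-value route cannot reproduce this decoupling: once you pass to difference quotients the scales of $\varepsilon$ and $t^*$ are entangled, and two-point concentration on $f-f_k$ cannot uniformly control their ratio.
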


\begin{proof}
By definition, $\bar{d}_k(\hat{\alpha}_k) = - \argmin_{||d|| = 1} f_{\varepsilon}'(\hat{\alpha}_k;d) $. Hence,
    \begin{equation*}
        \begin{aligned}
        \mathcal{X}_{\varepsilon}(\hat{\alpha}_k)-\mathcal{X}_{k}(\hat{\alpha}_k)
        & \leq \frac{1}{|S_k|}\sum_{i=1}^{|S_k|} h'(\hat{\alpha}_k;\bar{d}_k(\hat{\alpha}_k),\omega_i)-E[h_{\varepsilon}'(\hat{\alpha}_k;\bar{d}_k(\hat{\alpha}_k),\omega)]\\
        & = \frac{1}{|S_k|}\sum_{i=1}^{|S_k|} [h'(\hat{\alpha}_k;\bar{d}_k(\hat{\alpha}_k),\omega_i)-h_{\varepsilon}'(\hat{\alpha}_k;\bar{d}_k(\hat{\alpha}_k),\omega_i)]\\
        & + \frac{1}{|S_k|}\sum_{i=1}^{|S_k|} h_{\varepsilon}'(\hat{\alpha}_k;\bar{d}_k(\hat{\alpha}_k),\omega_i) - E[h_{\varepsilon}'(\hat{\alpha}_k;\bar{d}_k(\hat{\alpha}_k),\omega)]
        \end{aligned}
    \end{equation*}
\noindent Note that for each $\omega_i$, we have $h'(\hat{\alpha}_k;\bar{d}_k(\hat{\alpha}_k),\omega_i) \leq h_{\varepsilon}'(\hat{\alpha}_k;\bar{d}_k(\hat{\alpha}_k),\omega_i)$, since $\partial h(\hat{\alpha}_k,\omega_i) \subseteq \partial h_\varepsilon(\hat{\alpha}_k,\omega_i) $. Also, by Hoeffding's inequality, if $|S_k| \geq \frac{-2(\varepsilon')^2}{L^2\log \delta}$, then 

\begin{equation*}
    P\big(\frac{1}{|S_k|}\sum_{i=1}^{|S_k|} h_{\varepsilon}'(\hat{\alpha}_k;\bar{d}_k(\hat{\alpha}_k),\omega_i) - E[h_{\varepsilon}'(\hat{\alpha}_k;\bar{d}_k(\hat{\alpha}_k),\omega)] \leq \varepsilon'\big) \geq 1-\delta,
\end{equation*}

\noindent which concludes the proof.
\end{proof}

\begin{proof} (Theorem \ref{optimality})
    First, by Theorem \ref{X and X_k} and Lemma \ref{d and X}, if $|S_k| \geq \frac{-2(\varepsilon')^2}{L^2\log \delta}$, then with probability at least $1-\delta$,
    \begin{equation*}
        ||d^*||-||d_k^*||=\lim_{\varepsilon \to 0} \mathcal{X}_{\varepsilon} (\hat{\alpha}_k) -\mathcal{X}_k(\hat{\alpha}_k) \leq \varepsilon'.
    \end{equation*}
    \noindent Combining with Lemma \ref{d_k^*}, we have
    \begin{equation*}
        P \big(||d^*|| < 4 \varepsilon + \varepsilon' \big) \geq 1 - \delta.
    \end{equation*}
\end{proof}

\section{Notation Tables}\label{notations}
\renewcommand{\arraystretch}{1.1}
\begin{table}[h]
    \footnotesize
    \centering
    \begin{tabular}{|c|l|}
    \hline
        Symbols & Definitions \\
    \hline
         $m$ & Number of samples\\
    \hline
        $S=\left\{ (z_i,w_i) \right\}_{i=1}^{m}$ & The dataset with pairs correspond to features and their associated labels\\
    \hline
        $(Z,W)$ & The random counterpart of $(z,w)$ \\
    \hline
        $\phi$ & The kernel mapping which maps the feature $z$ to a higher dimension\\
    \hline
        $\beta$ & The decision variable (classifier) for the SVM problem \\
    \hline
        $\alpha$ & The decision variable for kernel SVM problem\\
    \hline
        $Q \in \mathbb{R}^{m \times m}$ & The kernel matrix with $ Q_{ij} = \langle \phi(z_i), \phi(z_j) \rangle$ \\
    \hline 
        $Q_i$ & The $i^{th}$ row of $Q$ \\
    \hline
        $Q^k$ & The kernel matrix generated at $k^{th}$ iteration \\
    \hline
        $Q_i^k$ & The $i^{th}$ row of $Q^k$\\
    \hline 
        $S_k$ & The dataset used at $k^{th}$ iteration\\
    \hline 
        $|S_k|$ & The number of data points in $S_k$ \\
    \hline 
        $0 < L < \infty$ & The Lipschitz constant for function $f$ and $f_k$\\
    \hline 
        $M$ & A large constant to bound $|\langle \phi(z), \beta \rangle|$ for any $\phi(z)$ and $\beta$ \\
    \hline 
    \end{tabular}
    \caption{Definitions of Symbols in Section 1}
\end{table}

\renewcommand{\arraystretch}{1.3}
\begin{table}[h]
    \footnotesize
    \centering
    \begin{tabular}{|c|l|}
    \hline
        Symbols & Definitions \\
    \hline
        $d_k$ & The search direction at $k^{th}$ iteration\\
    \hline
        $g_k$ & The subgradient of $f_k$ at $k^{th}$ iteration\\
    \hline
        $G_k = \{d_{k-1},g_k\}$ & A convex set which contains all of the convex combinations of $d_{k-1}$ and $g_k$ \\
    \hline
        $Nr(G_k)$ & A function which returns a vector with the smallest norm in the convex hall of $G_k$ \\
    \hline
        $\varepsilon$ & An algorithmic parameter related to the termination criteria ($||d_k||<\varepsilon$) \\
    \hline
        $\delta$ & The region where the line-search is executed\\
    \hline 
        $\delta_0$, $\delta_{min}$ and $\delta_{max}$ & The initial, minimum and maximum choice of the region\\
    \hline
        $\eta_1 \in (0,1)$ & An algorithmic parameter to identify whether $f_k$ and $\hat{f}_k$ have similar decrease   \\
    \hline 
        $\eta_2 > 0$ & An algorithmic parameter to identify whether $||d_k||$ is large enough\\
    \hline
        $\gamma>1$ & An algorithmic parameter to adjust the line-search region $\delta$ \\
    \hline
        $\alpha_k$ & The candidate solution of $f_{k-1}$\\
    \hline
        $\hat{\alpha}_k$ & The optimal(incumbent) solution of $f_{k-1}$\\
    \hline
        $\beta_k$ & The candidate solution of $f_k$\\
    \hline
        $\hat{\beta}_k$ & The optimal(incumbent) solution of $f_k$\\
    \hline 
        $T_k$ & The dataset used in the $k^{th}$ iteration independent from $S_k$\\
    \hline 
        $|T_k|$ & The number of data points in $T_k$ \\
    \hline 
        $\frac{1}{4}<m_1<0.5$ & An algorithmic parameter to identify whether $f_k$ has sufficient decrease\\
    \hline 
        $m_2(<m_1)$ & An algorithmic parameter to identify whether the directional derivative of $f_k$ has sufficient increase \\
    \hline 
        $1< n \in \mathbb{Z}^+$ & An algorithmic parameter to avoid unlimited iterations of line-search\\
    \hline
        $\zeta$ & An algorithmic parameter to bound the line-search region\\
    \hline
    \end{tabular}
    \caption{Definitions of Symbols in Section 2}
\end{table}

\renewcommand{\arraystretch}{1.3}
\begin{table}[h]
    \centering
    \footnotesize
    \begin{tabular}{|c|l|}
    \hline
        Symbols & Definitions \\
    \hline
        $\kappa$ & The $\kappa$-approximation parameter (see Definition 3.1)\\ 
    \hline
        $I_k$ & An event when $f_k$ satisfies $\kappa$-approximation property\\
    \hline
        $\mu_1$ & The probability of event $I_k$\\
    \hline
        $\varepsilon_F$ & The $\varepsilon_F$-accurate parameter (see Definition 3.4)\\
    \hline
        $J_k $ & A event when $\hat{f}_k$ is $\varepsilon_F$-accurate \\
    \hline
        $\mu_2$ & The probability of event $J_k$ \\
    \hline
        $T_{\varepsilon} = \inf \{ k > 0: ||d_k|| < \varepsilon\}$ & The first time that $||d_k||<\varepsilon$ \\
    \hline
        $A_l$ & The renewal process which renews once $\delta_l > \frac{\varepsilon}{\zeta}$\\
    \hline
        $\tau_l = A_l - A_{l-1}$ & The inter-arrival time\\
    \hline 
        $N(k) = \max \{l: A_l \leq k\}$ & The counting process representing the number of renewals occurred up to $k^{th}$ iteration\\
    \hline
    \end{tabular}
    \caption{Definitions of Symbols in Section 3}
\end{table}
\end{document}